\def\eqref#1{equation~\ref{#1}}
\def\1{\bm{1}}
\DeclareMathAlphabet{\mathsfit}{\encodingdefault}{\sfdefault}{m}{sl}
\SetMathAlphabet{\mathsfit}{bold}{\encodingdefault}{\sfdefault}{bx}{n}
\DeclareMathOperator*{\argmin}{arg\,min}
\newcommand{\cA}{\mathcal{A}}  
 \newcommand{\cD}{\mathcal{D}}
\newcommand{\cM}{\mathcal{M}} 
\newcommand{\cO}{\mathcal{O}}
 \newcommand{\cY}{\mathcal{Y}}
 \newcommand{\cX}{\mathcal{X}}
 \newcommand{\RR}{\mathbb{R}}
\newcommand{\btheta}{{\bm{\theta}}}
\newcommand{\oset}[3][0ex]{\mathrel{\mathop{#3}\limits^{\vbox to#1{\kern-2\ex@\hbox{$\scriptstyle#2$}\vss}}}}
\newcommand{\optimal}[1]{\oset{\scalebox{.5}{$\star$}}{#1}}
\LetLtxMacro\orgvdots\vdots
\LetLtxMacro\orgddots\ddots
\theoremstyle{plain}
\newtheorem{theorem}{Theorem}[section]
\newtheorem{proposition}[theorem]{Proposition}
\theoremstyle{definition}
\theoremstyle{remark}
\begin{document}
\twocolumn[
\icmltitle{On The Fairness Impacts of Hardware Selection in Machine Learning}
\icmlsetsymbol{equal}{*}
\begin{icmlauthorlist}
\icmlauthor{Sree Harsha~Nelaturu}{equal,c4aic,saar}
\icmlauthor{Nishaanth K~Ravichandran}{equal,c4aic}
\icmlauthor{Cuong Tran}{dyan,uva}
\icmlauthor{Sara Hooker}{c4ai}
\icmlauthor{Ferdinando Fioretto}{uva}
\end{icmlauthorlist}

\icmlaffiliation{c4aic}{Cohere For AI Community}
\icmlaffiliation{saar}{Saarland University}
\icmlaffiliation{uva}{University of Virginia}
\icmlaffiliation{c4ai}{Cohere For AI}
\icmlaffiliation{dyan}{Dyania Health}

\icmlcorrespondingauthor{Sara Hooker}{sarahooker@cohere.com}
\icmlcorrespondingauthor{Ferdinando Fioretto}{fioretto@virginia.edu}
\author{
    name={Sree Harsha Nelaturu},
    affiliation={Cohere For AI Community, Saarland University},
    email={nelaturu.harsha@gmail.com}
}
\author{
    name={Nishaanth K.~Ravichandran},
    affiliation={Cohere For AI Community},
    email={nishaanthkanna@gmail.com}
}
\author{
    name={Cuong Tran},
    affiliation={University of Virginia},
    email={cuongtran@virginia.edu}
}
\author{
    name={Sara Hooker},
    affiliation={Cohere For AI},
    email={sarahooker@cohere.com}
}
\author{
    name={Ferdinando Fioretto},
    affiliation={University of Virginia},
    email={fioretto@virginia.edu}
}
\icmlkeywords{Machine Learning, ICML}
\vskip 0.3in
]
\printAffiliationsAndNotice{\icmlEqualContribution}

\sloppy\allowdisplaybreaks

\begin{abstract}
In the machine learning ecosystem hardware selection is often regarded as a mere utility, overshadowed by the spotlight on algorithms and data. This is especially relevant in contexts like machine learning as-a-service platforms, where users often lack control over the hardware used for model training and deployment. This paper investigates the influence of hardware on the delicate balance between model performance and fairness. We demonstrate that hardware choices can exacerbate existing disparities, and attribute these discrepancies to variations in gradient flows and loss surfaces across different demographic groups. 
Through both theoretical and empirical analysis, %
the paper not only identifies the underlying factors but also proposes an effective strategy for mitigating hardware-induced performance imbalances.
\end{abstract}

\section{Introduction}

The leap in capabilities of modern machine learning (ML) models has been powered primarily by the availability of large-scale datasets, gains in available compute, and the development of algorithms that can effectively use these resources \citep{radford2019language,brown2020language}. 
As ML-based systems become integral to %
decision-making processes that bear considerable social and economic consequences, questions about their ethical application inevitably surface. 
While an active area of research has been devoted to understanding algorithmic choices and their implications on fairness \citep{hooker2020characterising,quan2023learning,caton2020fairness}  and robustness \cite{carlini2017towards,waqas2022exploring} in neural networks,
there has been limited work to date concerning the influence of hardware tooling on these critical aspects of model performance \citep{Hooker2020TheHL,zhuang2022randomness,JeanPaul2019IssuesIT}. 

This inquiry is especially pertinent as the ML hardware landscape undergoes substantial diversification, from successive generations of GPUs %
to custom deep-learning accelerators like TPUs \citep{TPU}. %

This is significant, as ML models are frequently trained on ML services where there is limited choice in hardware, often geared towards increasingly specialized AI hardware, encouraging the use of a narrow range of ML frameworks \citep{mince2023grand}. More importantly, recent studies have indicated that models trained on different hardware can exhibit varying levels of accuracy %
\citep{zhuang2022randomness}. 
A potential explanation is that hardware-induced nuances, such as precision discrepancies and threading behaviours, may lead iterative optimizers to different local minima during training \citep{Hooker2020TheHL}.

This paper further shows that these hardware-induced variations can disproportionately affect different groups, leading to a ``rich get richer, poor get poorer'' dynamic. This effect is depicted in Figure \ref{fig:motivation}, which shows the variable impact of hardware changes on both a facial recognition task accuracy (left) and on an image classification task (right) across demographic groups or classes. Note that the only variable factor in this study is the underlying GPU adopted for training, while all other sources of software-related randomness are controlled.
Remarkably, while the accuracy rates for majority groups (illustrated with lighter colors) remain relatively stable across different hardware configurations, the rates for minority groups (darker colors) exhibit considerable variability (left). This disparity also arises in balanced datasets (right).

\begin{figure*}
    \centering
        \includegraphics[width=0.475\textwidth]{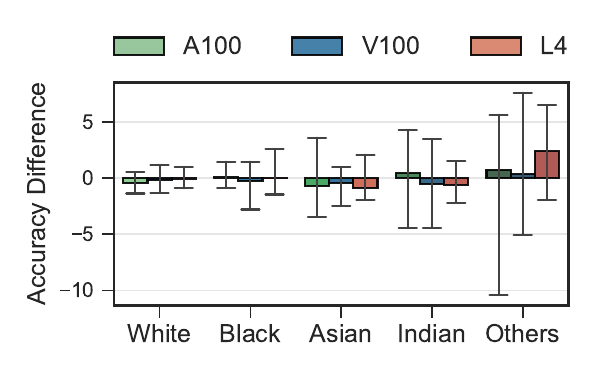}
        \includegraphics[width=0.475\textwidth]{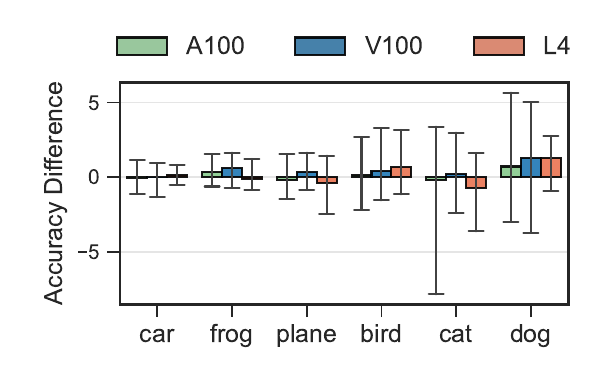}
    
        \caption{A model (ResNet34) with the same parameters (random seeds, epochs, batch-size) on different hardware can have vastly different performance results, especially for minority groups (dark colors). The reference hardware is T4. {\bf Left}: UTK-Face, {\bf Right}: CIFAR-10.}
        \label{fig:motivation}
\end{figure*}

Building on these observations, this work leverages a theoretical framework aimed at quantifying hardware-induced performance disparities and reveals, through an extensive empirical validations, that hardware choices systematically alter not just accuracy but also fairness. 
Our findings suggest that two key mechanisms contribute to these disparities: {\bf (1)} variations in gradient flows across groups, and {\bf (2)} differences in local loss surfaces. Informally, the former affects local optimality for groups, while the latter pertains to model separability. We analyze these contributing factors in detail, providing both theoretical and extensive empirical experiments across various hardware configurations, networks, and datasets. Additionally, by recognizing these factors, we propose a simple yet effective technique that can be used to mitigate the disparate impacts caused by hardware tooling. The proposed method relies on an alteration to the training procedure to augment the training loss with the factors identified as responsible for unfairness to arise. 

Our study stands out for its breadth, conducting experiments that cover a range of hardware architectures, datasets, and model types and the reported results highlight the critical influence of hardware on both performance and ethical dimensions of machine learning models.

\section{Related Work}
The intersection of hardware selection and fairness in ML is an emerging area of research that has received limited attention. For example, the stochastic effects introduced by software dependencies, such as compilers and deep learning libraries, have been recently shown to impact model performance \cite{Hong2013,Pham2020}. However, these studies have evaluated these effects within the constraints of specific setups, leaving a gap in understanding how hardware selection affect fairness in ML.

Research on ML fairness has predominantly focused on algorithmic aspects. The interplay between fairness and efficiency has been examined through the lens of model compression techniques like pruning and quantization %
\cite{xu2022can, ahia2021low, TFKN:neurips22}. 
Another possibly related line of work is that which looks at the relationship between fairness and privacy in ML systems \cite{bagdasaryan2019differential,cummings2019compatibility,TMF:neurips21,TF:ijcai23,TFHY:ijcai21,ZFH:ijcai22,DRF:icml24}. 
In particular, Differential Privacy \cite{dwork:06}, an algorithmic property often employed to protect sensitive data in data analytics tasks, has been shown to conflict with fairness objectives. \citep{FHZ:ijcai22} surveys the recent progress in this area, exploring this tension, and suggesting that achieving both privacy and fairness is not solely a data-driven issue but may also require careful algorithmic design. 

Finally, the influence of randomness introduced through algorithmic choices, including the impact of random seed, initialization, and data handling, has been a focal point of research.
\citet{summers2021nondeterminism} benchmark the separate impact of choices of initialization, data shuffling and augmentation. \citet{Ko2023FAIREnsembleWF} evaluate how ensembling can mitigate unfair outcomes. 
Another body of scholarship has focused on sensitivity to non-stochastic factors including choice of activation function and depth of model \citep{Snapp2021,shamir2020smooth}, hyper-parameter choices \citep{lucic2018gans,Henderson2017,kadlec2017,MLSYS2021_cfecdb27}, the use of data parallelism \citep{shallue2019measuring} and test set construction \citep{sogaard-etal-2021-need,lazaridou2021pitfalls,Melis2018OnTS}. 
While these factors are critical in training phases, they do not study how hardware selection may influence the model outcomes. 

Our analysis is inspired by the work of \cite{TMF:neurips21}, who study the disparate impact of learning under differential privacy. Our aim is to offer insights into how hardware choices can impact the balance between model performance and fairness. 

\begin{figure*}[!t]
    \centering
     \includegraphics[width=\textwidth]{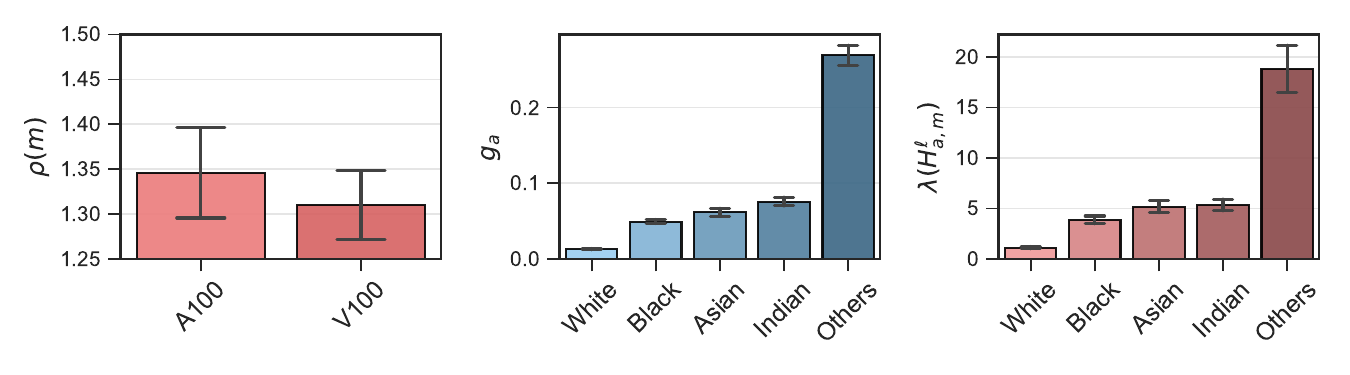}
     \caption{Illustration of the three main components in Theorem \ref{thm:taylor}. {\bf Left}: Difference in model parameter $\rho(m) = \max_m' \| \btheta^{*}_{m} -\btheta^{*}_{m'}\|_2$ when $m=T4$. {\bf Middle}: Gradient flows $\|\bm{g}^\ell_a\|$ on T4 for five races. {\bf Right}: Hessian max eigenvalues $\lambda(\bm{H}^{\ell}_{a,m})$ on T4 for five races. }
        \label{fig:2}
\end{figure*}

\section{Preliminaries}
\label{sec:preliminaries}
We consider a dataset $D$ consisting of $n$ datapoints 
$(\bm{x}_i, a_i, y_i)$, with $i \in [n]$, drawn i.i.d.~from an unknown
distribution $\Pi$. Therein, $\bm{x}_i \in \cX$ is a feature vector,
$a_i \in \cA$ with $\cA = [g]$ (for some finite $g$) is a demographic group 
attribute, and $y_i \in \cY$ is a class label. For example, in a face recognition task, the training example feature $\bm{x}_i$ may 
describe a headshot of an individual, the protected attribute $a_i$
the individual's gender or ethnicity, and $y_i$ the identity of the individual. 
The goal is to learn a predictor $f_\btheta : \cX \to \cY$, where $\btheta$
is a $k$-dimensional real-valued vector of parameters that minimizes 
the empirical risk function:
\begin{align}
\label{eq:erm}
\optimal{\btheta} \,= \argmin_\btheta J(\btheta; D) = 
\frac{1}{n} \sum_{i=1}^n \ell(f_\btheta(\bm{x}_i), y_i),
\end{align}
where $\ell: \cY \times \cY \to \RR_+$ is a non-negative \emph{loss function} that measures the model quality. 

\emph{In this work, our focus is on analyzing the impact of different hardware, used when 
optimizing the above expression, in relation to the model fairness (as defined next).} 
The paper uses $\optimal{\btheta}_m$ to denote the parameters of a model training on hardware
$m \in \cM$, the set of all possible hardware.

\smallskip\noindent\textbf{Fairness.} The fairness analysis focuses on the notion of \emph{hardware sensitivity}, defined 
as the difference among the risk functions of some protected group $a$ of models trained on a different hardware $m'$ from a reference hardware $m$: 
\begin{equation}
\label{eq:2}
    \Delta(a,m) = \max_{m' \in \cM} | J(\optimal{\btheta}_{m}; D_a) -   J(\optimal{\btheta}_{m'}; D_a) |.
\end{equation}
Therein, $D_a$ denotes the subset of $D$ containing samples ($\bm{x}_i, a_i, y_i$) whose group membership $a_i = a$. 
Intuitively, the hardware sensitivity represents the change in loss (and thus, in accuracy) that a given group experiences as a result of hardware tooling.
Fairness is measured in terms of the maximal \emph{hardware loss difference}, 
also referred to as \emph{fairness violation} across all groups:
\begin{equation}
\label{eq:3}
    \xi(D, m) = \max_{a, a' \in \cA} |\Delta(a, m) - \Delta(a', m)|,
\end{equation}
defining the largest hardware sensitivity across all protected groups. 
A fair training method would aim at minimizing the hardware sensitivity across different hardware. 

The goal of the paper is to shed light on why fairness issues arise when the only difference in training aspects of a model is the hardware in which it was trained.

\section{Fairness analysis in tooling}

To gain insights into how tooling may introduce unfairness, we start by providing a useful bound for the hardware sensitivity of a given group. Its goal is to isolate key aspects of tooling that are responsible for the observed unfairness. The following assumes the loss function $\ell(\cdot)$ to be at least twice differentiable, which is the case for common ML loss functions, such as mean squared error or cross-entropy loss. We report proofs of all theorems in the Appendix.

\begin{figure*}
    \centering
     \includegraphics[width=\textwidth]{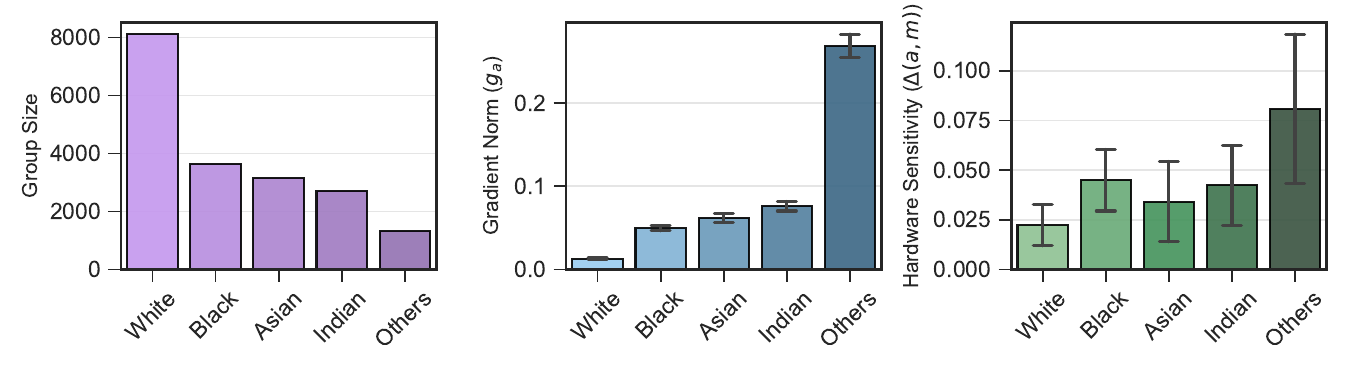}
    \caption{Illustration of Impact of group size on Gradient Norm Imbalance as shown in Theorem \ref{thm:3}. {\bf Left}: Group size used in training for five races. {\bf Middle}: Gradient norms $\bm{g}_a$ averaged across three devices for five races and 10 seeds each. {\bf Right}: Hardware Sensitivity; Notice higher sensitivity as the group size decreases.}
\label{fig:group_size}
\end{figure*}

\begin{theorem}
\label{thm:taylor} 
Given reference hardware $m$, the \textbf{hardware sensitivity} $\Delta(a,m)$ of group $a \in \cA$ is upper bounded by:
\begin{align}
  \Delta(a, m)  \leq &\left\| \bm{g}_{a,m}^{\ell} \right\|_2 \times  \rho(m) 
   + \frac{1}{2} \lambda\left( \bm{H}_{a,m}^{\ell} \right) \times \rho(m)^2 \notag\\
  & + \cO\left( \rho(m)^3 \right),
  \label{eq:thm1}
\end{align}
where $\rho(m) = \max_{m \in \cM} \| \btheta^*_m -\btheta^*_{m'}\|_2$, $\bm{g}_{a,m}^\ell = \nabla_{\btheta} J( \optimal{\btheta}_m; D_{a})$, and $\bm{H}_{a,m}^{\ell} = \nabla^2_{\btheta} J(\optimal{\btheta}_m; D_{a})$, with $\lambda(\Sigma)$ as the maximum eigenvalue of matrix $\Sigma$.
\end{theorem}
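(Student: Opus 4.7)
The plan is to derive the bound by performing a second-order Taylor expansion of the group loss $\cL(\btheta, D_a)$ around the parameter $\optimal{\btheta}_m$ obtained on the reference hardware, and then bounding each resulting term using the definitions of $\rho(m)$, $\bm{g}^{\ell}_{a,m}$, and $\bm{H}^{\ell}_{a,m}$. Since the argument is pointwise in $m'$ and the right-hand side will end up depending on $m'$ only through $\|\optimal{\btheta}_m - \optimal{\btheta}_{m'}\|$, the max over $m'$ in the definition of $\Delta(a,m)$ can be absorbed cleanly into $\rho(m)$ at the end.

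First, I would fix an arbitrary alternative hardware $m' \in \cM$, set $\bm{\delta} = \optimal{\btheta}_{m'} - \optimal{\btheta}_{m}$, and write the second-order Taylor expansion of $\cL(\cdot, D_a)$ around $\optimal{\btheta}_m$:
\[
\cL(\optimal{\btheta}_{m'}, D_a) = \cL(\optimal{\btheta}_m, D_a) + (\bm{g}^{\ell}_{a,m})^\top \bm{\delta} + \frac{1}{2}\, \bm{\delta}^\top \bm{H}^{\ell}_{a,m}\, \bm{\delta} + R(\bm{\delta}),
\]
where the remainder $R(\bm{\delta}) = \cO(\|\bm{\delta}\|^3)$ is controlled under the paper's standing assumption that $\ell$ is at least twice differentiable (plus a locally bounded third derivative, which is immediate for the standard cross-entropy and squared losses on a bounded parameter region).

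Next, I would rearrange, take absolute values, and apply the triangle inequality to obtain
\[
\bigl|\cL(\optimal{\btheta}_m, D_a) - \cL(\optimal{\btheta}_{m'}, D_a)\bigr| \le \bigl|(\bm{g}^{\ell}_{a,m})^\top \bm{\delta}\bigr| + \frac{1}{2}\bigl|\bm{\delta}^\top \bm{H}^{\ell}_{a,m}\, \bm{\delta}\bigr| + \cO(\|\bm{\delta}\|^3).
\]
I would then dispatch the linear term with Cauchy--Schwarz, $|(\bm{g}^{\ell}_{a,m})^\top \bm{\delta}| \le \|\bm{g}^{\ell}_{a,m}\|\,\|\bm{\delta}\|$, and dispatch the quadratic term with the Rayleigh-quotient bound $|\bm{\delta}^\top \bm{H}^{\ell}_{a,m}\, \bm{\delta}| \le \lambda(\bm{H}^{\ell}_{a,m})\,\|\bm{\delta}\|^2$, using that $\bm{H}^{\ell}_{a,m}$ is symmetric so its spectral norm coincides with its largest eigenvalue. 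Finally, since $\|\bm{\delta}\| = \|\optimal{\btheta}_{m'} - \optimal{\btheta}_m\|_2 \le \rho(m)$ by definition of $\rho(m)$, the right-hand side becomes independent of $m'$, so taking $\max_{m'}$ of the left-hand side (which gives $\Delta(a,m)$) leaves the same upper bound and yields exactly \eqref{eq:thm1}.

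The main obstacle is really just a technicality around the quadratic term: the statement uses the maximum eigenvalue $\lambda(\bm{H}^{\ell}_{a,m})$ rather than the spectral norm, so one needs either the implicit assumption that the Hessian is positive semidefinite near the local optimum (standard at a trained minimum) or the convention that $\lambda(\cdot)$ denotes the largest eigenvalue in absolute value. A secondary point is making the $\cO(\rho(m)^3)$ remainder rigorous; this follows from the Lagrange form of the Taylor remainder under a third-derivative bound of $\ell$ on the ball of radius $\rho(m)$ around $\optimal{\btheta}_m$, a mild smoothness assumption consistent with the differentiability conditions already stated in the preamble to the theorem.
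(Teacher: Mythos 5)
Your proposal follows essentially the same route as the paper's own proof: a second-order Taylor expansion of the group loss around $\optimal{\btheta}_m$, Cauchy--Schwarz for the linear term, the Rayleigh-quotient bound for the quadratic term, and absorbing the maximum over $m'$ into $\rho(m)$ at the end. Your side remark about needing the Hessian to be positive semidefinite (or interpreting $\lambda(\cdot)$ as the largest absolute eigenvalue) to handle the absolute value is a fair technical point that the paper's proof also glosses over, but it does not change the argument.
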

The upper bound is derived using a second-order Taylor expansion and Rayleigh quotient properties and is inspired by the analysis of \citet{TMF:neurips21,TFHY:ijcai21}.

Firstly, empirically, we find that this upper bound closely approximates the hardware sensitivity in practice, 
while also observing that the contribution of the third-order term in Equation (\ref{eq:thm1}) is negligible. This empirical validation is 
consistent with observations in existing literature \cite{vadera2022methods,gu2021sde}.

Next, we note that the constant factor $\rho(m)$ is non-zero, as evidenced by Figure \ref{fig:2} (left). These two observations emphasize the presence of two key group-dependent terms in Equation (\ref{eq:thm1}) that modulate hardware sensitivity and form the crux of our fairness analysis. Specifically, they are 
{\bf (1)} the norms of the gradients $\bm{g}_{a,m}^\ell$ (also called gradient flows) and {\bf (2)} the maximum eigenvalue of the Hessian matrix $\bm{H}_{a,m}^\ell$ for a given group \( a \) and reference hardware \(m\). Informally, the first term relates to the local optimality within each group, whereas the second term is indicative of the model's capacity to distinguish between different groups' data. Figure \ref{fig:2} provides an illustration of the disparity of these components across protected groups. 
We will subsequently demonstrate that these components serve as the primary sources of unfairness attributed to tooling.

The next sections analyze the effect of gradient flows and the Hessian to unfairness in the models trained on different hardware. 
This understanding, besides clarifying the roles of these components onto (un)fairness, will help us designing an effective mitigation technique, introduced in Section~\ref{sec:mitigation}.

\subsection{Group Gradient Flows}

Theorem \ref{thm:taylor} illustrates that a key determinant of unfairness in hardware selection lies in the differences in gradient flows across groups. It points out that larger gradient flows within a group are associated with increased hardware sensitivity for that group.

The size of the training group, in particular, plays a significant role in determining the associated gradient flows. Theorem~\ref{thm:2} explores this phenomenon in binary group settings, illustrating how differences in group sizes can lead to distinct gradient norms. Subsequently, Theorem \ref{thm:3} broadens the scope of this analysis to multi-group contexts, under mild assumptions. 

\begin{theorem}
\label{thm:2}
Consider a local minima $\btheta^*_m$ of Equation (\ref{eq:erm}) on a reference hardware $m \in \cM$ and let the set of protected groups be $\cA = \{a, b\}$. If  $|D_a| > |D_b|$ then $\| \bm{g}_a \| < \| \bm{g}_b\|$.
\end{theorem}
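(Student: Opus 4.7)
The plan is to exploit the first-order optimality condition at the local minimum $\btheta^*_m$ to derive an exact algebraic relationship between the two group gradients, and then read off the claimed inequality from that relationship.

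First, I would write the empirical risk over $D$ as a convex combination of the per-group risks. Since the groups $a$ and $b$ partition $D$, we have
\begin{align*}
J(\btheta; D) \;=\; \frac{|D_a|}{n}\, J(\btheta; D_a) \;+\; \frac{|D_b|}{n}\, J(\btheta; D_b),
\end{align*}
where $J(\btheta; D_c) = \tfrac{1}{|D_c|}\sum_{i \in D_c} \ell(f_\btheta(\bm{x}_i), y_i)$ for $c \in \{a,b\}$, consistently with the notation $\bm{g}_{c,m}^{\ell} = \nabla_\btheta J(\btheta^*_m; D_c)$ introduced in Theorem~\ref{thm:taylor}.

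Next, I would apply the first-order necessary condition at the local minimum: $\nabla_\btheta J(\btheta^*_m; D) = \bm{0}$. Taking the gradient of the decomposition above and evaluating at $\btheta^*_m$ yields
\begin{align*}
\frac{|D_a|}{n}\, \bm{g}_a \;+\; \frac{|D_b|}{n}\, \bm{g}_b \;=\; \bm{0},
\end{align*}
so that $\bm{g}_a = -\tfrac{|D_b|}{|D_a|}\, \bm{g}_b$. Taking Euclidean norms on both sides gives the exact identity $\|\bm{g}_a\| = \tfrac{|D_b|}{|D_a|}\, \|\bm{g}_b\|$. The hypothesis $|D_a| > |D_b|$ implies the ratio is strictly less than one, which delivers the desired inequality $\|\bm{g}_a\| < \|\bm{g}_b\|$ (with the usual caveat that one needs $\bm{g}_b \neq \bm{0}$ for strictness; otherwise both norms vanish and the statement holds with equality).

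The argument is essentially two lines once the normalizations are set up, so there is no real technical obstacle; the main thing to be careful about is the normalization convention in the definition of $J(\btheta; D_a)$, since a different convention (for instance, using sums rather than sample averages) would flip the direction of the scaling in the optimality condition. I would therefore begin the proof by explicitly stating the convention used, so that the weights $|D_a|/n$ and $|D_b|/n$ in the decomposition are unambiguous, and then the rest follows immediately from stationarity and the triangle/norm identity.
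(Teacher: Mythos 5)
Your proof is correct and follows essentially the same route as the paper: the paper's argument (sketched in the main text and carried out for the multi-group case in the appendix proposition) likewise writes the stationarity condition as $\sum_{c} \tfrac{|D_c|}{|D|}\bm{g}_c = \bm{0}$ and reads off the norm relation, which in the two-group case is exactly your identity $\|\bm{g}_a\| = \tfrac{|D_b|}{|D_a|}\|\bm{g}_b\|$. Your explicit caveat that strictness requires $\bm{g}_b \neq \bm{0}$ is a fair (minor) refinement the paper leaves implicit.
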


The proof is derived by leveraging the conditions for a local minimum and the proportional contributions of each group to the total gradient. This result explains why \emph{smaller groups yield larger gradient norms, which consequently amplify sensitivity to stochasticity introduced by hardware}, as observed in our experimental results. We next  generalize these insights to arbitrary group sets \( \cA \).
\begin{theorem}
\label{thm:3} 
Consider a particular hardware $m \in \cM$, suppose for any group $a, a' \in \cA$ the angle between two gradient vectors $\overrightarrow{\boldsymbol{g}^{\ell}_{a,m}}; \overrightarrow{\boldsymbol{g}^{\ell}_{a',m}}$ is smaller than $\frac{\pi}{2}$. Then if we $ \underline{a} = \min_{a \in \cA} |D_a|$, i.e., the group with least number of training samples then the following holds:
\(
\| \bm{g}^{\ell}_{\underline{a},m}\| = \max_{a \in \cA} \| \bm{g}^{\ell}_{a,m}\|.
\)
\end{theorem}

Theorem \ref{thm:3} suggests that the group with the smallest number of training samples will exhibit the largest gradient norm upon convergence. The underlying assumption—that the angle between any pair of gradient vectors is less than \( \frac{\pi}{2} \)—essentially posits that the learning tasks across different groups are not highly dissimilar, which is often observed in practice \cite{guangyuan2022recon}.

This influence of group size on gradient norm is exemplified in Figure \ref{fig:group_size}. Within the UTK-Face dataset, the \textsl{White} category has the highest number of training samples, while \textsl{Others} has the fewest (left plot). Consequently, the majority group (\textsl{White}) exhibits the smallest gradient norm, while the group \textsl{Others} shows the largest (Figure \ref{fig:group_size} middle). As corroborated by Theorem \ref{thm:taylor}, which establishes the link between gradient norm and hardware sensitivity (unfairness), the majority group manifests the least sensitivity, whereas the minority one has the highest. This is illustrated in the right subplot of Figure \ref{fig:group_size}. Additional empirical evidence supporting the impact of group sizes on gradient norms is provided in Section \ref{sec:experiments}.

\subsection{Group Loss Landscape}
\label{sec:hessian}

While the previous section reviewed the influence of gradient flows on the unfairnes observed in tooling, Theorem \ref{thm:taylor} introduces another critical variable in determining hardware sensitivity: the \emph{eigenvalues of the group Hessians} and suggests that groups with larger eigenvalues are more susceptible to higher hardware sensitivity. Intuitively, the eigenvalues associated to the group Hessian serves as an indicator for the \emph{flatness} of the loss landscape around the optimal solution \cite{li2018visualizing}, as well as for the model's generalization capability \cite{kaur2023maximum}. 

\begin{figure}[t]
	\centering
    \includegraphics[width=\linewidth, height=110pt]{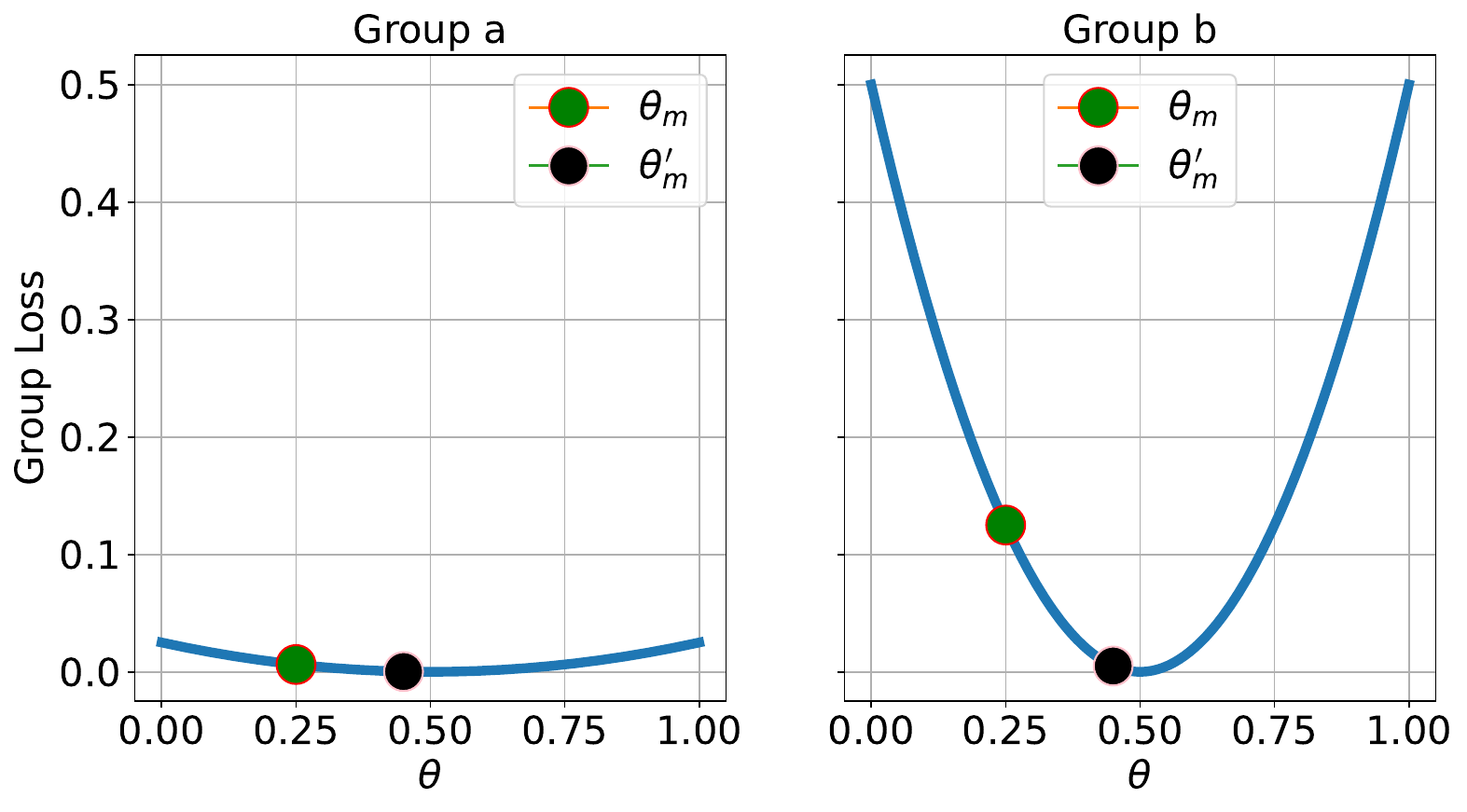}
	    \caption{\small Illustration on the impact of group Hessians. Group 'a' has a smaller Hessian compared to Group 'b', resulting in lower sensitivity of the loss function for Group 'a'.}
        \label{fig:illustration_hessian_norm}
\end{figure}
For illustrative purposes, Figure \ref{fig:illustration_hessian_norm} represents how differences in Hessians maximum eigenvalues impact hardware sensitivity. In this example, group \( a \) has a flatter loss landscape around the stationary point compared to group \( b \) due to its smaller Hessian norm. As a result, variations in model parameters \( \btheta^*_m \) and \( \btheta^*_{m'} \) across hardware $m$ and $m'$ lead to a much smaller change in the loss function for group \( a \) than for group \( b \). This difference underscores the direct link between Hessian norm and the degree of hardware sensitivity experienced by each group.

The next result sheds light on the underlying reasons for the observed disparities in group-specific Hessians. Theorem \ref{thm:4} establishes a relationship between the maximum eigenvalues of the group Hessian and the average distance of samples within that group to the decision boundary.

\begin{theorem}
\label{thm:4}
Consider a model $f_{\btheta^*_m}$ trained using binary cross entropy on reference hardware $m$. Then, $\forall a \in \cA$, the maximum eigenvalue of the group Hessian $\lambda(\bm{H}_a^{\ell})$ is bounded by:
\begin{align*}
    \lambda(\bm{H}_a^{\ell}) &\leq \frac{1}{|D_a|} \sum_{(\bm{x}, y)\in D_a} \delta_{\bm{x}} \times \left\| \nabla_{\btheta} f_{\btheta^*_m}(\bm {x}) \right\|^2 \\
    &\quad + \left| f_{\btheta^*_m}(\bm{x}) - y \right| \times \lambda\left( \nabla^2_{\btheta} f_{\btheta^*_m}(\bm{x}) \right),
\end{align*}
where $\delta_{\bm{x}} = \left( {f}_{\btheta^*_m}(\bm{x}) \right) \left( 1 - {f}_{\btheta^*_m}(\bm{x}) \right)$ is the distance to decision boundary and ${f}_{\btheta}(\bm{x}) \in [0, 1] $ is the output obtained after the last (Sigmoid) layer. 

\end{theorem}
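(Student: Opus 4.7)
The plan is to compute the per-sample Hessian of the binary cross-entropy loss explicitly, bound its top eigenvalue by splitting into two structurally simple pieces, and then use convexity of the top-eigenvalue functional to average over the samples in $D_a$.

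First I would write $J(\btheta; D_a) = |D_a|^{-1} \sum_{(\bm{x},y) \in D_a} \ell(f_\btheta(\bm{x}), y)$, so that $\bm{H}^{\ell}_{a} = |D_a|^{-1} \sum_{(\bm{x},y)} \nabla^2_\btheta \ell$ at $\btheta^*_m$. Since $\lambda(\cdot)$ is convex on the symmetric matrices, it commutes with averages up to an inequality: $\lambda(\bm{H}^{\ell}_a) \le |D_a|^{-1} \sum_{(\bm x, y) \in D_a} \lambda\bigl(\nabla^2_\btheta \ell(f_\btheta(\bm{x}), y)\bigr)$. This reduces the theorem to a per-sample statement.

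Second, I would apply the chain rule to a single sample. Viewing BCE as a composition of the sigmoid-cross-entropy scalar loss with the network map $f_\btheta$, the per-sample Hessian decomposes as
\begin{equation*}
\nabla^2_\btheta \ell \;=\; \ell''\, \nabla_\btheta f_\btheta(\bm{x})\, \nabla_\btheta f_\btheta(\bm{x})^\top \;+\; \ell' \, \nabla^2_\btheta f_\btheta(\bm{x}),
\end{equation*}
where the scalar derivatives collapse, via the sigmoid identity $\sigma'(z)=\sigma(z)(1-\sigma(z))$, to $\ell' = f_\btheta(\bm{x}) - y$ and $\ell'' = f_\btheta(\bm{x})(1 - f_\btheta(\bm{x})) = \delta_{\bm x}$.

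Third, I would apply Weyl's inequality $\lambda(A+B) \le \lambda(A) + \lambda(B)$ for symmetric $A,B$. The first summand is a positive rank-one outer product scaled by $\delta_{\bm x} \ge 0$, so its top eigenvalue is exactly $\delta_{\bm x} \|\nabla_\btheta f_\btheta(\bm{x})\|^2$. The second summand is the scalar $\ell' = f_\btheta(\bm{x}) - y$ multiplied by $\nabla^2_\btheta f_\btheta(\bm{x})$, whose top eigenvalue is upper bounded by $|f_\btheta(\bm{x}) - y|\, \lambda(\nabla^2_\btheta f_\btheta(\bm{x}))$. Summing the two per-sample bounds and averaging over $D_a$ yields precisely the inequality in the statement.

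The main obstacle I expect is handling the sign of $\ell' = f_\btheta(\bm{x}) - y$ cleanly: when $\ell' < 0$, $\lambda_{\max}(\ell' M) = \ell' \lambda_{\min}(M)$, so the passage from $\ell'\lambda(\nabla^2_\btheta f_\btheta)$ to $|f_\btheta(\bm x) - y|\,\lambda(\nabla^2_\btheta f_\btheta)$ needs the observation that $\lambda_{\max}(cM) \le |c|\,\lambda_{\max}(M)$ whenever $\lambda_{\max}(M) \ge 0$ (and is non-positive otherwise, so the bound still holds). A secondary subtlety is reconciling the notational convention that $f_\btheta(\bm{x}) \in [0,1]$ denotes the sigmoid output in $\delta_{\bm x}$, while the $\nabla_\btheta f_\btheta$ and $\nabla^2_\btheta f_\btheta$ terms arise from absorbing the sigmoid into the loss and differentiating the underlying logit map; this is just bookkeeping but should be made explicit in the write-up.
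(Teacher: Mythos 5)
Your proposal follows essentially the same route as the paper's proof: split the group Hessian into per-sample Hessians and use subadditivity of the top eigenvalue (the paper invokes Weyl's inequality, you invoke convexity of $\lambda(\cdot)$ --- the same fact), apply the chain rule to obtain the rank-one-plus-curvature decomposition of the per-sample Hessian, use the sigmoid/BCE identities $\ell' = f_{\btheta^*_m}(\bm{x}) - y$ and $\ell'' = f_{\btheta^*_m}(\bm{x})\bigl(1-f_{\btheta^*_m}(\bm{x})\bigr)$, and apply Weyl again to the two summands. Your remark that the gradient and Hessian terms are really those of the logit map, with the sigmoid absorbed into the loss, is a correct reading of what the statement must mean and is worth making explicit.

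The one substantive caveat is your handling of the sign of $\ell'$. The inequality you propose, $\lambda_{\max}(cM) \le |c|\,\lambda_{\max}(M)$ whenever $\lambda_{\max}(M)\ge 0$, is false for $c<0$: take $M=\mathrm{diag}(1,-10)$ and $c=-1$, so $\lambda_{\max}(cM)=10$ while $|c|\,\lambda_{\max}(M)=1$. For negative $c$ one has $\lambda_{\max}(cM)=|c|\,\bigl(-\lambda_{\min}(M)\bigr)$, which is controlled by $|c|\,\lambda_{\max}(M)$ only when $-\lambda_{\min}(M)\le\lambda_{\max}(M)$; the bound that holds unconditionally is $\lambda_{\max}(cM)\le |c|\,\norm{M}_2$ with the spectral norm in place of the top eigenvalue. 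Since $\nabla^2_{\btheta} f_{\btheta^*_m}(\bm{x})$ is generally indefinite and $f_{\btheta^*_m}(\bm{x})-y$ is negative whenever $y=1$, this step needs either the spectral-norm version of the statement or an additional assumption on the spectrum of the network Hessian. To be fair, the paper's own proof passes from $(f_{\btheta^*_m}(\bm{x})-y)\,\nabla^2_{\btheta} f_{\btheta^*_m}(\bm{x})$ to $|f_{\btheta^*_m}(\bm{x})-y|\,\lambda\bigl(\nabla^2_{\btheta} f_{\btheta^*_m}(\bm{x})\bigr)$ without comment, so it contains the identical gap; you noticed the subtlety, but the patch you propose does not close it.
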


This theorem relies on derivations of the Hessian associated with model loss function and Weyl inequality provided in \Cref{thm:2}.
In other words, Theorem \ref{thm:4} shows that the maximum eigenvalue of the group-specific Hessian is directly linked to how close the samples from that group are to the decision boundary, as measured by the term \( {f}_{\btheta^*_m}(\bm{x})  ( 1 - {f}_{\btheta^*_m}(\bm{x}))\). Intuitively, this term is at its maximum when the classifier is most uncertain about its prediction, meaning when \( {f}_{\btheta^*_m}(\bm{x}) \) is close to 0.5. Conversely, it reaches a minimum when the classifier is most certain, that is, when \( {f}_{\btheta^*_m}(\bm{x}) \) approaches either 0 or 1. This relationship is further elaborated in the subsequent proposition.

\begin{proposition}
\label{prop:dist_bndry}
 Consider classifier $f_{\btheta*_m}(\bm{x})$ trained on hardware $m$. For a sample $\bm{x} \in D $, the term ${f}_{\btheta^*_m}(\bm{x}) (1 - {f}_{\btheta^*_m}(\bm{x}) )$ is maximized when ${f}_{\btheta^*_m}(\bm{x})  = 0.5$ and minimized when ${f}_{\btheta^*_m}(\bm{x}) \in \{0,1\} $.
\end{proposition}

An empirical illustration, shown in Figure  \ref{fig:7}, highlights the relationship  between group Hessian eigenvalues and proximity to the decision boundary. Notice how samples from  the \textsl{Others} group are closer to the decision boundary, indicating that they are less \emph{separable} than those in other groups. As a result this group reports the largest eigenvalue of group Hessians. Similar observation on other datasets are discussed in the following section.

\begin{figure}[!tb]
    \centering
         \includegraphics[width=\linewidth]{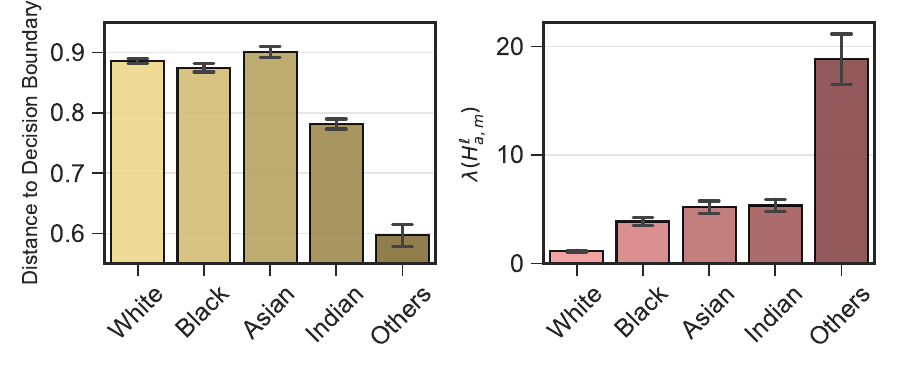}
        \caption{The relationship between Hessian norm and distance to the decision boundary.}
        \label{fig:7}
\end{figure}

\smallskip

Having discussed the main reasons justifying unfairness in hardware selection, the next sections describes the empirical validation and discuss a possible mitigation solution.

\section{Experimental Setup}
We first review the experimental setup.

\textbf{Hardware selection.}~ 
The experiments use a variety of GPUs: Tesla T4 \citep{turing}, Tesla V100 \citep{volta}, Ampere A100 \cite{amperewhitepaper}, and Ada L4 GPU \citep{ada}. These GPUs differ in CUDA core count, total threads, and streaming multiprocessors (refer to Table \ref{tab:gpu_comparison} in the Appendix for details). Given the lack of detailed public information on the internal workings of these devices, we relied on generational differences as key indicators for our hardware selection, also considering availability and suitability while selecting the devices. In addition, we strictly control software related randomness and consider the fact that these GPUs show stochastic behavior in outputs due to varied floating-point processing as detailed in \cite{dab} and parallelization. For example, the T4 and L4 GPUs, being designed for inference, have lower memory requirements and distinct parallelization designs. %

\paragraph{Controlling other sources of stochasticity.} 
Our primary aim is to isolate the impact of hardware on model fairness and performance. This requires to control all other factors as much as possible. We ensure determinism by fixing the random seed for all python libraries, and ensuring consistent data-loading order and augmentations with FFCV-SSL \cite{FFCV_SSL}. This approach ensures that the same stochastic elements are present during training and inference. We also used the same library versions, including cudNN \cite{cudnn} with full-precision (FP32) except in the case of CelebA dataset where we used mixed-precision training. We compute hardware sensitivity by calculating differences for identical random seeds across various devices, and ensuring that all other sources of randomness are fixed, then average these over multiple seeds and report the mean and standard deviation. We also report mean and standard deviation of other metrics averaged across multiple random seeds. \emph{This approach allows us to confidently attribute any observed variations in sensitivity or stochasticity specifically to the unique characteristics of the hardware platform.}

\paragraph{Datasets and architectures.} 
Our experiments were conducted using three key datasets: CIFAR-10 \cite{cifar10}, CelebA 
\cite{celeba}, and UTKFace \cite{utkface}. CIFAR-10 is balanced, while UTKFace and CelebA are naturally imbalanced. To study class imbalance in CIFAR-10, we create a variant where class 8 (\textsl{Ship}) is reduced to $20\%$  of its original size. For CelebA, we redefined the task into four classes based on \textsl{male} and \textsl{blond hair} attributes, creating an imbalanced multi-class dataset. In UTKFace, ethnicity labels are used for training. Additional details are provided in \Cref{app:datasets}.

The evaluations is performed across multiple hardware setups, hyper-parameters, datasets, and four architectures of increasing complexity: SmallCNN, ResNet18, ResNet34, and ResNet50 \cite{resnet}. 
More information regarding these architectures is provided in \Cref{app:architectures} %
For all experiments We used SGD+momentum $0.99$, with weight decay of $5e-4$, a three-phase one-cycle LR \cite{smith2015cyclical} scheduler with a starting learning rate of $0.1$. The batch size for CIFAR10, UTKFace and CelebA is set to $512$, $128$ ($32$ for ResNet50) and $200$, respectively. 
Models trained on CIFAR10 and CelebA were trained for $15$ epochs, and those trained on UTKFace for $20$ epochs. Further analysis on model convergence and impact on hardware sensitivity are provided in  \Cref{app:convergence}.

\begin{figure}[t]
     \centering     
         \includegraphics[width=0.9\linewidth, trim={0 10pt 0 0}, clip]{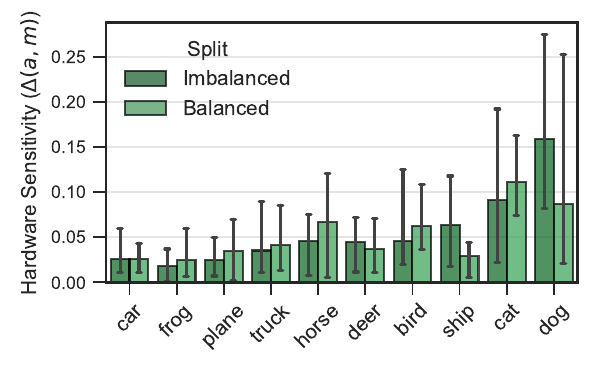}\\
         \includegraphics[width=0.9\linewidth, trim={0 10pt 0 4pt}, clip]{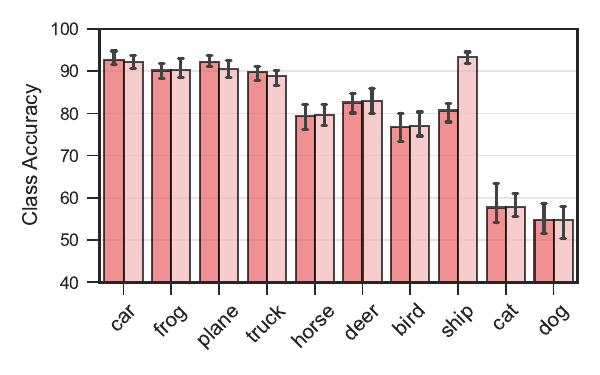}
         \caption{\textbf{Top:} Hardware sensitivity for CIFAR10 (ResNet34) Balanced (lighter) and Imbalanced (darker). %
         \textbf{Bottom:} Class-wise accuracy. %
          High Fairness violation are noted for the Imbalanced CIFAR10.}
         \label{fig:hardware_sensitivity}
\end{figure}

\begin{figure*}[ht!]
     \centering
     \begin{subfigure}[b]{\textwidth}
         \centering
         \includegraphics[width=\textwidth]{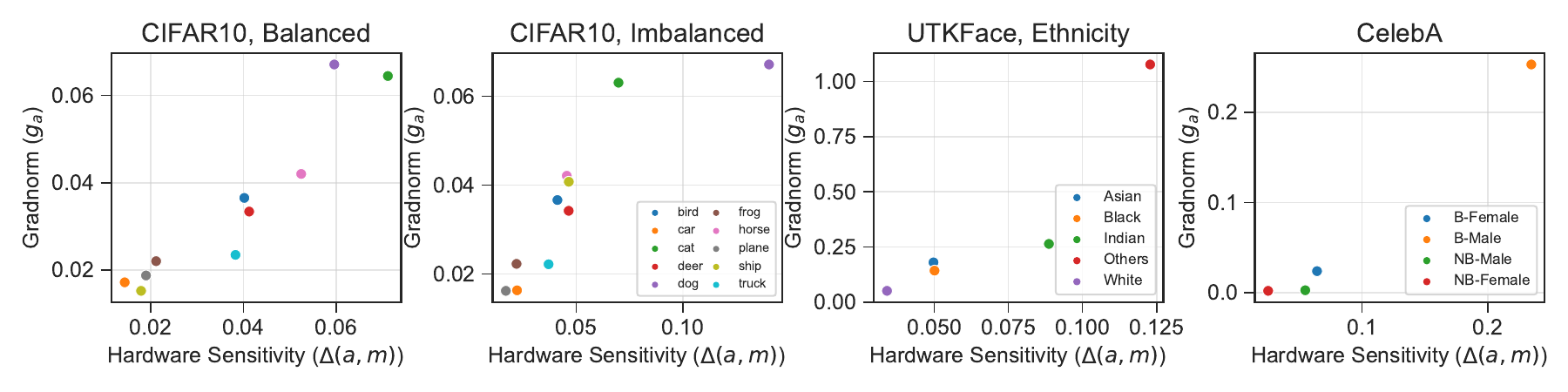}
     \end{subfigure}
        \caption{
Correlation plot between Hardware sensitivity and gradient flows. \textbf{First:} Even with a perfectly balanced dataset, classes with higher gradient norm tend to have higher sensitivity in the change of hardware. \textbf{Second:} Class 8 (\textsl{Ship}) is imbalanced in this setting and sees a sharp increase in its gradient norm and sensitivity. \textbf{Third:}There is a strong correlation between the gradient norm of groups and the hardware sensitivity in the ascending order of imbalance for UTKFace. \textbf{Fourth:} A similar trend is also found for the CelebA classification task.}
        \label{fig:gradient_norm_hardware_sensitivity}
\end{figure*}

\section{Experimental Results} 
\label{sec:experiments}

Our fairness analysis relies on the notion of hardware sensitivity, as defined in Equation (\ref{eq:2}), as the maximum difference in class loss between a model trained on a reference hardware and models trained on various other hardware setups, while keeping all other parameters unchanged to control software-related randomness. The notion of hardware sensitivity is of large theoretical value as it helps us understand the impact of hardware on model performance. However, ultimately, we are interested in measuring the accuracy variations across classes, due to training a model on different hardware. Therefore, in this section, we will also examine how hardware variations contribute to differences in accuracy across different groups.
When looking at hardware sensitivity, small values indicate more fair results.

In the models and architectures adopted, our analysis found notable fairness (hardware sensitivity) variations. Figure \ref{fig:hardware_sensitivity} illustrates this aspect for the CIFAR10 dataset. 

Firstly, observe that larger hardware sensitivity values for a class are associated with greater deviations in that class's accuracy. Next, also notice that classes showing smaller hardware sensitivity (indicative of greater fairness) tend to be those with higher overall accuracies.
To gain a better understanding of these trends, let us examine the hardware sensitivity of class 8 ({\sl Ship}) under both balanced and imbalanced scenarios, as depicted in Figure \ref{fig:hardware_sensitivity} (top). In the imbalanced setting, where class 8 (\textsl{Ship}) had five times fewer samples than other classes, there is a notable increase in hardware sensitivity from $0.017$ to $0.046$ compared to the balanced setting.

This pattern is not unique to one dataset. For instance, in the UTKFace dataset, the minority class {\sl Others} exhibits significantly higher hardware sensitivity of $0.122$ compared to $0.034$ of the majority class {\sl White}, this can be observed in \Cref{fig:dtd_utkface} (right). Similarly, in the CelebA dataset, the minority class {\sl blond-male} has a hardware sensitivity of $0.23$ which is more than $0.025$ of the majority class {\sl non-blond female} as can be seen in \Cref{fig:celeba_sensitivity} in the appendix. These observations support our hypothesis that hardware selection can disproportionately affect the performance of minority classes. 

Additionally, the paper provides insights on the relation between model architecture and its size and hardware sensitivity. These results are reported in \Cref{app:model_size} and provide strong indication that hardware sensitivity, and thus unfairness, increase as the complexity of the model increases.  

\begin{figure*}[t!]
    \centering
     \includegraphics[width=0.8\linewidth, trim={0 10pt 0 10pt}, clip]{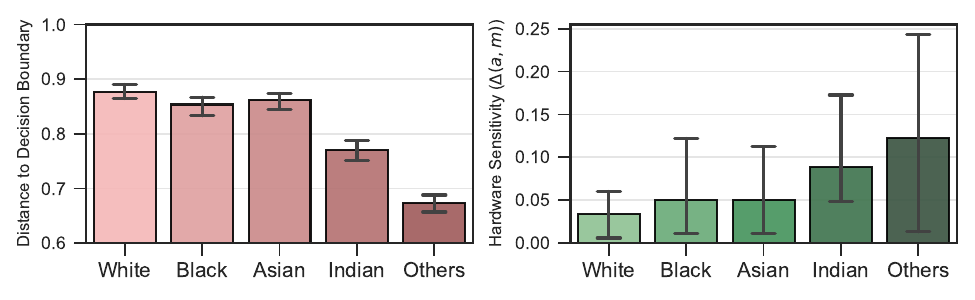}
    \caption{Relationship between distance to Decision Boundary and Hardware Sensitivity- UTKFace Ethnicity.}
    \label{fig:dtd_utkface}
\end{figure*}

\subsection{Gradient Flows} 
We now turn our attention to the influence of gradient flows on the disparities in accuracy resulting from hardware selection. As established in Theorem \ref{thm:taylor}, the magnitude of gradient flow within a group is directly linked to its hardware sensitivity.
The norm of the gradients, or the gradient flows, is indicative of the local optimality of the model for each group. Essentially, this term measures how sensitively the model responds to the specific characteristics within the data of each demographic group.

Larger gradient norm values suggest that the model is less optimized for that particular group, implying a greater potential for accuracy disparities due to hardware selection. 

Figure \ref{fig:gradient_norm_hardware_sensitivity} illustrates the relationship between the group gradient flows and the hardware sensitivity. Notice the strong correlation between a group's hardware sensitivity and its gradient flow, particularly under conditions of imbalance.
In CIFAR10, in particular, unbalancing class \textsl{Ship} (five times fewer samples) results in an increase in the gradient norm from $0.015$ to $0.041$. This trend is also echoed in the UTKFace-ethnicity task, where the gradient norm of the majority class \textsl{White} $0.05$ is significantly lower than $1.07$ of the minority class \textsl{Others}. CelebA shows a similar pattern; the majority class \textsl{non-blond-female} exhibits a gradient norm of $0.002$ which is much lesser than $0.25$ of the minority class \textsl{blond-male}.

These observations highlight the impact of class imbalance on gradient norms and hardware sensitivity, reinforcing the idea that minority classes tend to exhibit higher sensitivity to hardware variations, which in turn can affect the accuracy of the model for these specific groups.

\subsection{Distance to the Decision Boundary}
\label{sec:dist_bnd}
Next, we look at the second factor to unfairness highlighted in Theorem \ref{thm:taylor}: the effect of the maximum eigenvalue of the Hessian matrix of a group. Such values provide insight into the model's capacity to differentiate between the data of different groups. A larger maximum eigenvalue implies that the model's loss surface is more curved for the data of that particular group. This curvature is indicative of how sensitive the model is to variations in the data belonging to that group. Theorem \ref{thm:4} further links this component with the distance to the boundary, and we show next how such notion connects to hardware sensitivity (unfairness).

Figure \ref{fig:dtd_utkface} illustrates the relationship between the distance to decision boundary and hardware sensitivity for the UTKFace dataset. 
We adopt the definition of distance to the decision boundary from  \cite{TMF:neurips21}. For each sample $\bm{x}$, this distance is computed as $\delta_{\bm{x}} = 1 -\sum_{i=1}^{|\cY|} p^2_i(\bm{x})$, where \( p_i(\bm{x}) \) represents the softmax probability distribution of \(\bm{x}\) with values ranging between 0 and 1. Notably, the average distance to the decision boundary is a strong predictor of hardware sensitivity. In cases involving a minority class, such as \textsl{Others} in our dataset, this distance is significantly smaller ($0.67$) compared to other classes like \textsl{White} ($0.875)$.
We also note that other datasets follows similar trends: For example, on CelebA, the average distance to decision boundary is shorter ($0.56$) for \textsl{blond-male} compared to \textsl{non-blond-female} ($0.91$) as visible in \Cref{fig:dtd_celeba}. These findings aligns with the theoretical implications presented in our paper. 

\section{Mitigation Solution}
\label{sec:mitigation}

\begin{figure*}[t!]
    \centering
        \includegraphics[width=0.33\textwidth]{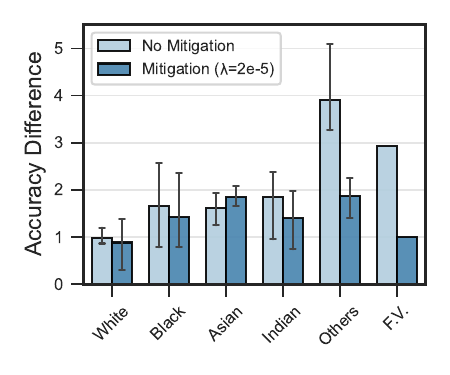}
    \hfill
        \includegraphics[width=0.33\textwidth]{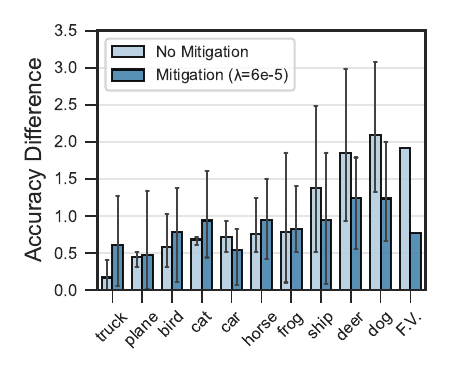}
    \hfill
        \includegraphics[width=0.33\textwidth]{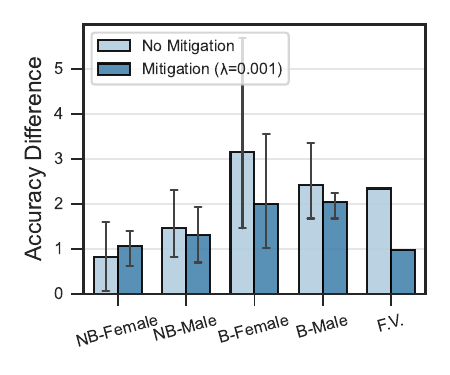}
    \caption{Accuracy difference across all hardware (analogous to hardware sensitivity) in percentage, for each class. 
    \textbf{Left}: Mitigation applied to UTKFace ($\lambda=2e-5$);
    \textbf{Middle}: Mitigation applied to CIFAR10 (Imbalanced) ($\lambda=6e-5$);
    \textbf{Right}: Mitigation applied to CelebA Dataset ($\lambda=0.001$).
    The last column in each subplot (named, F.V.) reports the fairness violation across all groups, which measures unfairness. 
    }
    \label{fig:mitigation}
\end{figure*}

Given the influence of the groups' gradient flows and group Hessians on the model unfairness due to hardware selection, one intuitive approach to mitigate the observed effects is to equalize the gradient and Hessian values across groups during training. However, this approach is computationally intensive and often impractical, especially for large models, primarily due to the challenges in computing the Hessian matrix during backpropagation. To address this issue, we propose a more efficient mitigation strategy, underpinned by the observations provided in  \Cref{thm:4}. This result elucidates the relationship between the group Hessian and the distance to the decision boundary. We leverage this insight and aim to align the average distance to the decision boundary among different groups.

We achieve this by augmenting the loss function with a component quantifying the disparity between the group-specific and batch-wide distances to the decision boundary.
\begin{align}
\label{eq:mitigation}
\optimal{\btheta}_{F} \,= \argmin_\btheta J(\btheta; D) 
+ \lambda \sum_{a \in \cA}(\delta_{\cD_a} - \delta_{\cD})^{2},
\end{align}
where $\delta_{S}$ represents the average distance to the decision boundary of samples $\bm{x} \in S$ as described in the \Cref{sec:dist_bnd}, and $\lambda$ is a hyper-parameter which calibrates the level of penalization associated with this term.

In our experiments, we implemented this mitigation strategy across various hardware setups and observed a significant reduction in fairness violation. 
While it is possible to optimize the choice of the value $\lambda$ during the empirical risk process, e.g., using a Lagrangian dual approach as in \cite{fioretto2020lagrangian,Fioretto:AAAI20,Tran:AAAI21}, we  found that even a traditional simple grid search allows us to  yield an effective reduction in accuracy disparity. 

\Cref{fig:mitigation} illustrates the effectiveness of the proposed method on UTKFace (left), CIFAR10 (middle), and CelebA (right). 
The plots report the accuracy difference, in percentage, across all hardware, for each class, as well as the models fairness violations (FV). The latter captures our unfairness metric, and is measured as the maximal accuracy difference across all hardware configurations, as detailed in \Cref{eq:3}. 

The results show a marked decrease in maximum difference in accuracy within various groups. For example, for the UTKFace dataset (left), despite experiencing a slightly increase in the accuracy difference for the {\sl Asian} group, we observe a three fold reduction in unfairness due to tooling. 
Similar effects are observed for the CIFAR10 dataset where, using $\lambda = 6e-5$, significantly reduces fairness violations, from 1.91 (pre-mitigation) to 0.77 (post-mitigation). 
Finally, for the ResNet50 model on CelebA (right), this mitigation reports a fairness violation reduction from 2.34 to 0.936 post-mitigation. It is to be noted that for each dataset, different $\lambda$ values produced different reductions in accuracy difference.

These results highlight the effectiveness of our proposed method in addressing fairness concerns attributable to hardware variations.

\section{Conclusion}
This paper focused on an often overlooked aspect of responsible model: How variations in hardware can disproportionately affect different demographic groups, leading to a Matthew's effect in performance and fairness. We've presented a theoretical framework to quantitatively assess these hardware-induced disparities, pinpointing variations in gradient flows across groups and differences in local loss surfaces as primary factors contributing to these disparities. These findings have been validated by extensive empirical studies, carried out on multiple hardware platforms, datasets, and architectures. 

\emph{The findings of this study are significant: the sensitivity of model performance to specific hardware choices can lead to unintended negative societal outcomes}. For example, organizations that release their source codes and model parameters may attest to satisfactory performance levels for certain demographic groups, based on results from their chosen training hardware. However, this claimed performance could substantially degrade when the models are implemented on different hardware platforms. Our work thus serves as both a cautionary tale and a guide for responsible practices in reporting across diverse hardware settings.

\subsection*{Impact Statement}
The analyses and solutions reported in this paper should not be intended as an endorsement for using the developed techniques to aid facial recognition systems. We hope this work creates further awareness of the unfairness caused by variations in data, model, and hardware setup.

\section*{Acknowledgments}
This research is partly funded by NSF grants SaTC-2133169, RI-2232054, and CAREER-2143706. F.~Fioretto is also supported by a Google Research Scholar Award and an Amazon Research Award. The views and conclusions of this work are those of the authors only. We would like to thank Cohere For AI for providing a generous amount of computing for conducting and analyzing our experiments. 

\bibliography{lib}
\bibliographystyle{icml2024}

\newpage
\appendix
\onecolumn

\setcounter{theorem}{0}

\begin{center}
\noindent{\vspace{0.25em}\LARGE 
\textbf{\sc Supplemental Material}}
\end{center}

\section{Missing proofs} 
\label{app:proof}

\begin{theorem}
\label{thm:taylor_proof} 
Given a reference hardware $m$, the \emph{hardware sensitivity} of a group $a \in \cA$ is upper bounded by\footnote{
  With a slight abuse of notation, the results refer to $\bar{\btheta}$ as the homonymous vector which is extended with $k-\bar{k}$ zeros.
}: 
\begin{align}
  \Delta(a, m)  \leq 
  \left\| \bm{g}_{a,m}^{\ell} \right\|
  \times  \max_{m' \in \cM} \left\| \optimal{\btheta}_m - \optimal{\btheta}_{m'} \right\|
  + 
  \frac{1}{2} \lambda\left( \bm{H}_{a,m}^{\ell} \right) 
  \times 
  \max_{m' \in \cM} \left\| \optimal{\btheta}_m - \optimal{\btheta}_{m'}  \right\|^2 
  + 
  \cO\left( \max_{m' \in \cM} \left\| \optimal{\btheta}_m - \optimal{\btheta}_{m'}  \right\|^3 \right),
  \label{eq:thm1_appendix}
\end{align}
where 
$\bm{g}_{a,m}^\ell = \nabla_{\btheta} J( \optimal{\btheta}_m; D_{a})$ is the vector of gradients associated with the loss function $\ell$ evaluated at $\optimal{\btheta}_m$ and computed using group data $D_a$,  
$\bm{H}_{a,m}^{\ell} = 
\nabla^2_{\btheta} J(\optimal{\btheta}_m; D_{a})$ is the Hessian matrix of the loss function $\ell$, at the optimal parameters vector $\optimal{\btheta}_m$, computed using the group data $D_a$ (henceforth simply referred to as \emph{group hessian}), and  $\lambda(\Sigma)$ is the maximum eigenvalue of a matrix $\Sigma$.
\end{theorem}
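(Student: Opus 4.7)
The plan is to obtain the bound by a second-order Taylor expansion of the per-group loss around the reference minimizer, followed by Cauchy--Schwarz on the linear term and a Rayleigh-quotient-type bound on the quadratic term. Fix an arbitrary alternative hardware $m' \in \cM$ and let $\Delta\btheta := \optimal{\btheta}_{m'} - \optimal{\btheta}_{m}$. Since $\ell$ is twice differentiable, we can expand
\begin{align*}
\cL(\optimal{\btheta}_{m'}, D_a) = \cL(\optimal{\btheta}_{m}, D_a) + (\bm{g}_{a,m}^{\ell})^\top \Delta\btheta + \tfrac{1}{2} \Delta\btheta^\top \bm{H}_{a,m}^{\ell}\, \Delta\btheta + \cO(\|\Delta\btheta\|^3).
\end{align*}
Rearranging and applying the triangle inequality to the absolute loss difference $|\cL(\optimal{\btheta}_{m}, D_a) - \cL(\optimal{\btheta}_{m'}, D_a)|$ isolates three terms whose magnitudes we can control independently.

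Next, I would bound each of the two leading terms separately. By Cauchy--Schwarz, $|(\bm{g}_{a,m}^{\ell})^\top \Delta\btheta| \leq \|\bm{g}_{a,m}^{\ell}\| \cdot \|\Delta\btheta\|$. For the quadratic form, I would invoke the standard Rayleigh-quotient inequality $|\Delta\btheta^\top \bm{H}_{a,m}^{\ell}\, \Delta\btheta| \leq \lambda(\bm{H}_{a,m}^{\ell})\, \|\Delta\btheta\|^2$; this step is where I should be careful, since the clean bound in terms of the maximum eigenvalue (rather than the spectral radius) only holds when the Hessian is positive semidefinite. This is justified here because $\optimal{\btheta}_m$ is a local minimum of the empirical risk, so $\bm{H}_{a,m}^{\ell}$ inherits nonnegativity along the relevant directions; any residual mismatch can be absorbed into the $\cO(\|\Delta\btheta\|^3)$ remainder.

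Finally, I would take the maximum over $m' \in \cM$ on both sides. The left-hand side becomes exactly $\Delta(a,m)$ by definition in Equation (\ref{eq:2}), and since each upper bound is monotone in $\|\Delta\btheta\|$, the right-hand side can be upper bounded by replacing every occurrence of $\|\Delta\btheta\|$ by $\rho(m) = \max_{m' \in \cM}\|\optimal{\btheta}_m - \optimal{\btheta}_{m'}\|_2$. Collecting the three pieces yields exactly the claimed inequality.

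The argument is largely mechanical; the only genuine obstacle is the Rayleigh-quotient step, where I need the maximum eigenvalue (as opposed to the operator norm) of the group Hessian to serve as an upper bound for $|\Delta\btheta^\top \bm{H}_{a,m}^{\ell}\, \Delta\btheta|/\|\Delta\btheta\|^2$. I would handle this by leveraging the local-minimum assumption on $\optimal{\btheta}_m$, which guarantees PSD-ness of the Hessian and thus identifies the maximum eigenvalue with the spectral radius on the cone where the quadratic form is nonnegative; any directions in which the quadratic form is negative only make the bound slack, preserving the inequality.
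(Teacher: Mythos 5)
Your proposal follows the paper's proof essentially verbatim: a second-order Taylor expansion of the group loss around $\optimal{\btheta}_m$, Cauchy--Schwarz on the linear term, the Rayleigh-quotient bound $\Delta\btheta^\top \bm{H}_{a,m}^{\ell}\,\Delta\btheta \leq \lambda\left(\bm{H}_{a,m}^{\ell}\right)\|\Delta\btheta\|^2$ on the quadratic term, and a final maximization over $m' \in \cM$ replacing $\|\Delta\btheta\|$ by $\rho(m)$. The one place you diverge is the side-justification of the eigenvalue step: local minimality of the \emph{aggregate} loss does not make the per-group Hessian $\bm{H}_{a,m}^{\ell}$ positive semidefinite, and a second-order discrepancy cannot be absorbed into the $\cO\left(\rho(m)^3\right)$ remainder — but the paper's own proof applies the same $\lambda_{\max}$ bound without addressing this, so your main chain of inequalities coincides with theirs.
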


\begin{proof}

Using a second order Taylor expansion around $\btheta^*_m$, the change in loss function of one particular group $a$ when it is trained on another hardware $m' \in \cM$  can be approximated as:
\begin{align}
  J(\btheta^*_{m'};D_a) - J(\btheta^*_m; D_a)
  &=    J\left(\btheta^*_m; D_{a}\right) 
    +  \left(\btheta^*_{m'} - \btheta^*_m \right)^\top \, 
    \nabla_{\theta} J\left(\btheta^*_m; D_{a} \right)  \notag\\
  & \hspace{15pt}+ \frac{1}{2} \left(\btheta^*_{m'}- \btheta^*_m\right)^\top\, 
  \bm{H}_{a}^\ell \left(\btheta*_{m'} - \btheta^*_m \right) 
   + \cO\left( \max_{m' \in \cM} \left\| \optimal{\btheta}_m - \optimal{\btheta}_{m'}  \right\|^3 \right) 
   - J\left(\btheta^*_m; D_{a}\right)  \notag\\
  &= \left(\btheta^*_{m'} - \btheta^*_m \right)^\top\,  \bm{g}^{\ell}_a 
  + \frac{1}{2} \left(\btheta^*_m - \btheta^*_m \right)^\top \,
  \bm{H}_a^\ell \left(\btheta^*_{m'} - \btheta^*_m \right) 
  +  \cO\left( \max_{m' \in \cM} \left\| \optimal{\btheta}_m - \optimal{\btheta}_{m'}  \right\|^3 \right)
  \label{eq:thm_1_1_appendix}
\end{align}
The above, follows from the loss $\ell(\cdot)$ being at least twice differentiable, by assumption.

By Cauchy-Schwarz inequality, it follows that:

\begin{equation}
    \left( \btheta^*_{m'} - \btheta^*_m \right)^\top 
    \bm{g}^{\ell}_a \leq 
    \left\| \btheta^*_{m'} - \btheta^*_m \right\| \times 
    \left\|\bm{g}^{\ell}_a \right\|.
    \label{eq:thm_1_2_appendix}
\end{equation}
In addition, due to the property of Rayleigh quotient we have:
\begin{equation}
    \frac{1}{2} \left(\btheta^*_{m'} - \btheta^*_m\right)^\top 
    \bm{H}_{a}^\ell \left(\btheta^*_{m'} - \btheta^*_{m} \right) 
    \leq 
    \frac{1}{2} \lambda \left(\bm{H}_a^\ell \right) \times 
    \left\|   \btheta^*_{m'} - \btheta^*_m \right\|^2.
    \label{eq:thm_1_3_appendix}
\end{equation}

Combine Equation \ref{eq:thm_1_1_appendix}, Equation \ref{eq:thm_1_2_appendix} and Equation  \ref{eq:thm_1_3_appendix} together we obtain the following upper bound:
\begin{align*}
  J(\btheta^*_{m'};D_a) - J(\btheta^*_m; D_a)
  \leq   \left\| \btheta^*_{m'} - \btheta^*_m \right\| \times 
    \left\|\bm{g}^{\ell}_a \right\| +  \frac{1}{2} \lambda \left(\bm{H}_a^\ell \right) \times 
    \left\|   \btheta^*_{m'} - \btheta^*_m \right\|^2.
\end{align*} 

By the definition of hardware sensitivity it follows that:

\begin{align*}
  \Delta(a,m) = \max_{m' \in \cM } | J(\btheta^*_{m'};D_a) - J(\btheta^*_m; D_a)| 
  \leq  \max_{m' \in \cM} \left\| \btheta^*_{m'} - \btheta^*_m \right\| \times 
    \left\|\bm{g}^{\ell}_a \right\| +  \frac{1}{2} \lambda \left(\bm{H}_a^\ell \right) \times 
     \max_{m' \in \cM} \ \left\|   \btheta^*_{m'} - \btheta^*_m \right\|^2.
\end{align*}

\end{proof}

\begin{theorem}
\label{thm:grad_imbalance} 
Consider a particular hardware $m \in \cM$, suppose for any group $a, a' \in \cA$ the angle between two gradient vectors $\overrightarrow{\boldsymbol{g}^{\ell}_{a,m}}; \overrightarrow{\boldsymbol{g}^{\ell}_{a',m}}$ is smaller than $\frac{\pi}{2}$. Then if we $ \underline{a} = \min_{a \in \cA} |D_a|$, i.e., the group with least number of training samples then the following holds:
\(
\| \bm{g}^{\ell}_{\underline{a},m}\| = \max_{a \in \cA} \| \bm{g}^{\ell}_{a,m}\|
\)
\end{theorem}

\begin{proof}

For notational convenience, denote $\bm{g}^{\ell}_m$ to be the gradient at convergence point over the whole dataset $D$. By the assumption, the gradient descent converges it follows that:

\begin{align}
\bm{g}^{\ell}_m = \sum_{a \in \cA} \frac{|D_a|}{|D|}\bm{g}^{\ell}_{a,m} = \bm{0}^T.
\end{align}
Consider the most minority group $\underline{a}$ (i.e, $|D_{\underline{a}}| = \argmin_{a \in \cA}  |D_a| $), it follows from the above equation that:

$$ \bm{g}^{\ell}_{\underline{a},m} = - \sum_{a \neq \underline{a}}\frac{|D_a|}{|D_{\underline{a}}|}\bm{g}^{\ell}_{a,m} $$

Taking the squared norm of vector on both sides of the previous equation, we have:

\begin{align}\| \bm{g}^{\ell}_{\underline{a},m} \|_2^2 = \left| \sum_{a \neq \underline{a}}\frac{|D_a|}{|D_{\underline{a}}|}\bm{g}^{\ell}_{a,m}\right|^2_2 = \sum_{a\neq \underline{a}} \|   \bm{g}^{\ell}_{a,m}\|^2_2 + 2 \sum_{a \neq a'\neq \underline{a}} ( \bm{g}^{\ell}_{a,m})^T\bm{g}^{\ell}_{a',m}
\end{align}

By the assumption that the angle between two gradient vectors of two arbitrary groups is less than $\frac{\pi}{2}$ hence $( \bm{g}^{\ell}_{a,m})^T\bm{g}^{\ell}_{a',m} \geq 0$. Thus it follows that:

\begin{align}\| \bm{g}^{\ell}_{\underline{a},m} \|_2^2 \geq \sum_{a\neq \underline{a}}  \|   \bm{g}^{\ell}_{a,m}\|^2_2 \geq \max_{a} \|    \bm{g}^{\ell}_{a,m}\|^2
\end{align}

Hence the smallest minority group will present the largest gradient norm. 

\end{proof}

\begin{theorem}
\label{thm:hessian_norm_bound} 
Let $f_{\btheta^*_m}$ be a binary classifier trained using a binary cross entropy loss on one reference hardware $m$. For any group $a \in \cA$, the maximum eigenvalue of the group Hessian $\lambda(\bm{H}_a^{\ell})$ can be upper bounded by:
\begin{equation}
    \lambda(\bm{H}_a^{\ell}) \leq \frac{1}{|D_a|}
    \sum_{(\bm{x}, y)\in D_a}
    \underbrace{
        \left( {f}_{\btheta^*_m}(\bm{x}) \right) 
        \left( 1 - {f}_{\btheta^*_m}(\bm{x}) \right)}_{\textit{Closeness to decision boundary}} 
        \times 
        \left\| \nabla_{\btheta} f_{\btheta^*_m}(\bm {x}) \right\|^2 
     + 
         \underbrace{\left| f_{\btheta^*_m}(\bm{x}) - y \right|}_{\textit{Error}} 
         \times 
        \lambda\left( \nabla^2_{\btheta} f_{\btheta^*_m}(\bm{x}) \right). 
    \label{eq:hessian_norm_bound}
\end{equation}
\end{theorem}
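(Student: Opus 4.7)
The plan is to derive a per-sample second-order decomposition of the binary cross-entropy loss via the chain rule, and then bound the top eigenvalue of the group Hessian by invoking Weyl's inequality twice: once to split each per-sample Hessian into a rank-one piece plus a residual piece, and once to pass the eigenvalue inside the sample average over $D_a$.

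First, I would differentiate $\ell = -y\log f_\btheta(\bm{x}) - (1-y)\log(1 - f_\btheta(\bm{x}))$ twice with respect to $\btheta$. Exploiting the canonical-link identity that BCE composed with the sigmoid simplifies the first derivative to the residual $f-y$, and that differentiating the sigmoid produces the factor $f(1-f)$, a direct chain-rule calculation yields the per-sample decomposition
\[
\nabla^2_\btheta \ell \;=\; f(1-f)\,\nabla_\btheta f\,(\nabla_\btheta f)^\top \;+\; (f-y)\,\nabla^2_\btheta f.
\]
The first summand is rank-one, symmetric and positive semi-definite (since $f(1-f) \ge 0$); the second is a residual-weighted Hessian of the scalar model output. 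Crucially, the scalar prefactors $f(1-f)$ and $|f-y|$ appearing here are exactly the ``closeness to the decision boundary'' and ``error'' factors in the claim.

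Second, I would apply Weyl's inequality $\lambda_{\max}(A + B) \le \lambda_{\max}(A) + \lambda_{\max}(B)$ to this decomposition. The rank-one PSD summand has its unique nonzero eigenvalue equal to its trace, giving $f(1-f)\,\|\nabla_\btheta f\|^2$. For the second summand, the inequality $\lambda_{\max}(cA) \le |c|\,\lambda_{\max}(A)$ for symmetric $A$ (a Rayleigh-quotient consequence of the same kind already used in the proof of Theorem~\ref{thm:taylor_proof}) yields $|f-y|\,\lambda(\nabla^2_\btheta f)$. This establishes the per-sample version of the stated inequality.

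Finally, since $\bm{H}_a^\ell$ is the uniform average over $D_a$ of the per-sample Hessians, one further application of Weyl's subadditivity on symmetric matrices passes $\lambda_{\max}$ inside the summation and delivers the claimed bound after dividing by $|D_a|$. I expect the main obstacle to be not any single inequality but the notational bookkeeping of the chain rule: making sure that $\nabla_\btheta f$ and $\nabla^2_\btheta f$ in the final bound track the scalar output being fed into BCE, and that the $f(1-f)$ factor is correctly identified with the derivative of the sigmoid rather than with the second derivative of the loss as a function of $f$. Once that identification is fixed, the remaining steps are a short assembly of standard matrix inequalities.
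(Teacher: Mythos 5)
Your proposal matches the paper's proof essentially step for step: the same chain-rule per-sample decomposition with scalar prefactors $f(1-f)$ and $f-y$, Weyl's inequality to separate the rank-one term (whose top eigenvalue is $f(1-f)\,\|\nabla_\btheta f\|^2$) from the residual-weighted term, and a second application of Weyl's subadditivity to pass the maximum eigenvalue through the average over $D_a$. Your closing caveat about identifying $f(1-f)$ with the sigmoid derivative (i.e., differentiating with respect to the pre-sigmoid scalar output) is well taken -- the paper simply writes $\nabla^2_f \ell = f(1-f)$ without making that distinction explicit -- but it does not alter the argument.
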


\begin{proof}
First notice that an upper bound for the Hessian loss computed on a group $a \in \cA$ can be derived as:
\begin{align}
    \lambda(\bm{H}_a^{\ell}) 
    &= \lambda\left( 
    \frac{1}{|D_a|} \sum_{(\bm{x}, y) \in D_a} \bm{H}_{\bm{x}}^{\ell} \right) 
    \leq 
    \frac{1}{|D_a|} \sum_{(\bm{x}, y) \in D_a}  
    \lambda\left( \bm{H}_{\bm{x}}^{\ell} \right)
  \label{eq:hessian_decompose}
\end{align}
where $\bm{H}_{\bm{x}}^{\ell}$ represents the Hessian loss associated with a sample  $\bm{x} \in D_a$ from group $a$.
The above follows Weily's inequality which states that for any two symmetric matrices $A$ and $B$,  $\lambda(A + B) \leq \lambda(A) + \lambda(B)$. 

Next, we will derive an upper bound on the Hessian loss associated to a sample $\bm{x}$. First, based on the chain rule a closed form expression for the Hessian loss associated to a sample $\bm{x}$ can be written as follows:
\begin{align} 
    \bm{H}^{\ell}_{\bm{x}} &=  
    \nabla^2_{f}   \ell\left(f_{\btheta^*_m}(\bm{x}), y\right)
    \left[ \nabla_{\btheta} f_{\btheta^*_m}(\bm{x}) 
    \left( \nabla_{\btheta} f_{\btheta^*_m}(\bm{x}) \right)^\top \right]
    + 
    \nabla_{f} \ell\left(f_{\btheta^*_m}(\bm{x}), y\right) 
    \nabla^2_{\btheta} f_{\btheta^*_m}(\bm{x}).
   \label{eq:hessian_sample}
\end{align}
The next follows from that 
\begin{align*}
\nabla_{f}   \ell\left( f_{\btheta^*_m}(\bm{x}), y\right) 
&= (f_{\btheta^*_m}(\bm{x}) - y), \\
\nabla^2_{f}   \ell\left(f_{\btheta^*_m}(\bm{x}), y\right) 
&= f_{\btheta^*_m}(\bm{x}) \left(1 - f_{\btheta^*_m}(\bm{x})\right).
\end{align*}
Applying the Weily inequality again on the R.H.S.~of Equation \ref{eq:hessian_sample}, we obtain:
\begin{align}
    \lambda(\bm{H}_{\bm{x}}^{\ell}) & \notag
    \leq  f_{\btheta^*_m}(\bm{x}) 
    \left(1 - f_{\btheta^*_m}(\bm{x})\right) \times 
    \left\| \nabla_{\btheta} f_{\btheta^*_m}(\bm{x}) \right\|^2 
    + \lambda\left( f_{\btheta^*_m}(\bm{x}) - y\right) \times 
    \nabla^2_{\btheta} f_{\btheta^*_m}(\bm{x})\\
    & \leq 
    f_{\btheta^*_m}(\bm{x}) 
    \left(1 - f_{\btheta^*_m}(\bm{x})\right) \times 
    \left\| \nabla_{\btheta} f_{\btheta^*_m}(\bm{x}) \right\|^2 
    + 
    \left| f_{\btheta^*_m}(\bm{x}) - y \right| 
    \lambda \left( \nabla^2_{\btheta} 
    f_{\btheta^*_m}(\bm{x}) \right)
    \label{eq:hessian_bound}
\end{align}

The statement of Theorem \ref{thm:hessian_norm_bound} is obtained combining Equations \ref{eq:hessian_bound} with \ref{eq:hessian_decompose}.
\end{proof}

\begin{proposition}
\label{appendix_dist_bndry}
 Consider a binary  classifier $f_{\btheta*_m}(\bm{x}) $ trained on one reference hardware $m$. For a given sample $\bm{x} \in D $, the term ${f}_{\btheta^*_m}(\bm{x}) (1 - {f}_{\btheta^*_m}(\bm{x}) )$ is maximized when ${f}_{\btheta^*_m}(\bm{x})  = 0.5$ and minimized when ${f}_{\btheta^*_m}(\bm{x}) \in \{0,1\} $.
\end{proposition}

\begin{proof}
First, notice that $f_{\btheta^*_m}(\bm{x}) \in [0,1]$, as it represents the soft prediction (that returned by the last layer of the network), thus ${f}_{\btheta^*_m}(\bm{x}) \geq f^2_{\btheta^*_m}(\bm{x})$. It follows that:

\begin{equation}
    f_{\btheta^*_m}(\bm{x}) \left( 1 -  f_{\btheta^*_m}(\bm{x}) \right) 
    =  f_{\btheta^*_m}(\bm{x}) - f^2_{\btheta^*_m}(\bm{x}) \geq 0.
\end{equation}
In the above, it is easy to observe that the equality holds when either $f_{\btheta^*_m}(\bm{x}) = 0$ or $f_{\btheta^*_m}(\bm{x}) = 1$.

Next, by the Jensen inequality, it follows that:
\begin{equation}
    f_{\btheta^*_m}(\bm{x}) \left( 1 -  f_{\btheta^*_m}(\bm{x}) \right) 
    \leq  \frac{\left( f_{\btheta^*_m}(\bm{x})+1 
                     - f_{\btheta^*_m}(\bm{x}) \right)^2}{4} 
    = \frac{1}{4}.
\end{equation}
The above holds when $f_{\btheta^*_m}(\bm{x}) = 1 - f_{\btheta^*_m}(\bm{x})$, in other words, when $f_{\btheta^*_m}(\bm{x}) = 0.5$. 
Notice that, in the case of binary classifier, this refers to the case when the sample $\bm{x}$ lies on the decision boundary. 
\end{proof}

\section{Choice of Hardware}\label{app:hardware}

We report experiments across widely adopted GPU types: Tesla T4 \citep{turing}, Tesla V100 \citep{volta}, Ampere A100 \cite{amperewhitepaper}  and Ada L4 GPU \citep{ada}. We choose this hardware because it represents a valuable variety of different design choices at a system level, and is also widely adopted across research and industry. Below we include additional context about how the design of this hardware differs.

\begin{table}[!thb]
\label{app:table_hardware}

    \captionsetup{justification=centering}
    \centering
    \begin{adjustbox}{max width=\textwidth}
    \begin{tabular}{lcccc}
    \toprule
    \textbf{Feature/Specification} & \textbf{Tesla V100} \citep{volta} & \textbf{Ampere A100} \cite{amperewhitepaper} & \textbf{Tesla T4} \citep{turing} & \textbf{Ada L4} \citep{ada} \\
    \midrule
    Hardware Architecture & Volta & Ampere & Turing & Ada Lovelace \\
    CUDA Cores & 5,120 & 6,912 & 2,560 & 7,424 \\
    Streaming Multiprocessors & 80 & 108 & 40 & 58 \\
    Total Threads & 163,840 & 221,184 & 81,920 & -- \\
    Tensor Cores & 640 & 432 (improved) & 320 & 232 (4th Gen) \\
    Memory & 16GB/32GB HBM2 & 40GB/80GB HBM2 & 16GB GDDR6 & 24GB GDDR6 w/ ECC \\
    Memory Bandwidth & Up to 900 GB/s & Up to 2,000 GB/s & Up to 320 GB/s & 300 GB/s \\
    FP32 Performance & 15.7 TFLOPS & 19.5 TFLOPS & 8.1 TFLOPS & 30.3 TFLOPS \\
    Interconnect & NVLink 2.0, 300 GB/s & NVLink 3.0, 600 GB/s & PCIe Gen 3 (32GB/s), No NVLink & -- \\
    TDP & 300W & 400W (variant-dependent) & 70W & 72 Watts \\
    \bottomrule
    \end{tabular}
    \end{adjustbox}
    \caption{Comparison of the feature design and system specifications of the hardware evaluated across all experiements. }
    \label{tab:gpu_comparison}
\end{table}

\textbf{NVIDIA V100 GPU} The NVIDIA V100 GPU, built upon the Volta microarchitecture \cite{voltawhitepaper}, introduced Tensor Cores as a notable innovation. Tensor Cores are specialized units that perform Fused-Multiply Add (FMA) operations, enabling the multiplication of two FP16 4x4 matrices with the addition of a third FP16 or FP32 matrix. 

\textbf{NVIDIA Tesla T4 GPU} The T4 is based upon the Turing Microarchitecture \cite{turingwhitepaper}, presented second-generation Tensor Cores capable of conducting FMA operations on INT8 and INT4 matrices. Despite both the T4 and the V100 sharing the same CUDA Core version and an equal number of CUDA cores per Streaming Multiprocessor (SM), the Tesla T4 GPU is specifically designed for inference workloads. Consequently, it incorporates only half the number of CUDA cores, SM Units, and Tensor Cores compared to the V100 GPU. Additionally, it utilizes slower GDDR6 memory, resulting in reduced memory bandwidth, but it is much more efficient in power consumption terms making it desirable for inference workloads.

The generous memory of the  V100 GPU relative to the T4 leads to increased speed of processing of the V100 GPU. This is because tensor cores are relatively fast, and typically the delay in processing is attributable to waiting for inputs from memory to arrive. With smaller memory, this means more retrieval trips.

\textbf{NVIDIA A100 GPU} The A100 GPU is based on the Ampere microarchitecture \cite{amperewhitepaper} and presents significant improvements relative to the V100 and T4. The A100 features faster up to 80 GB HBM2e memory, compared to the V100's upper limit of 20 GB HBM2 memory. The A100 provides more memory capacity and higher memory bandwidth, which allows for handling larger datasets and more complex models.

Although the number of Tensor cores per group was reduced from 8 to 4, compared to the Turing V100 GPU, these Third-generation Tensor Cores exhibit twice the speed of their predecessors and support newer data types, including FP64, TF32, and BF16. Furthermore, the Ampere architecture increased the number of CUDA cores and SM Units to 6,912 and 108, respectively, resulting in a notable 35\% increase in the number of threads, compared to the V100 GPU, that can be processed in parallel.

\textbf{NVIDIA L4 GPU} The Ada Lovelace Microarchitecture \cite{ada}, was designed on TSMC's 5nm Process Node, leading to an increased Performance Per Watt Metric. This allowed Nvidia to pack in more CUDA Cores within a single Streaming Multiprocessor (SM), leading to an increased FLOPS Throughput. The L4 GPU, of the Ada Lovelace Microarchitecture, is an inference-friendly GPU, Leading to TDP being fixed at 72 Watts, allowing High Energy Efficiency. To reduce cost, Nvidia opted for the GDDR6 Memory instead of the High Performance HBM, found in their flagship GPUs. The Fourth-generation Tensor Cores support new Datatypes like FP8 (With Sparsity), allowing a much higher throughput. Also, the number of Tensor Cores per SM has been increased. L4 is a substantial improvement from the previous inference-friendly GPU T4.

\subsection{Datasets}
\label{app:datasets}

\textbf{CIFAR10 and CIFAR100} \cite{cifar10} are datasets which contain colored natural images of size 32 x 32. In CIFAR10, there are 10 classes of objects with a total of 60000 images (50000 train - 5000 per class, 10000 test -- 1000 per class).

\textbf{Imbalanced versions} For our experiments we benchmark two versions of the CIFAR10 dataset, a \emph{Balanced} version which is the original dataset described above and an \emph{Imbalanced} version. The 'Imbalanced' version is a modified version of the original where the class 8 (Ship - CIFAR10) has been reduced to 20\% of their original size. The other classes are not modified.

\textbf{UTKFace} The UTKFace \cite{utkface} is a large scale dataset of face images. This dataset has 20,000 images with annotations for age, gender and ethnicity and images taken in a variety of conditions and image resolutions. It is naturally imbalanced with respect to ethnicity, which provides a challenging and informative setting for our experiments. In this paper, we investigated classification using the ethnicity annotation. The task we perform is image classification -- there are 5 class labels: Asian, Indian, White, Black and Others. This is a useful task as it allows us to investigate the disparate effect of tooling on a task where the dataset is naturally imbalanced and highlights a sensitive use case involving protected attributes.

\textbf{CelebA} The CelebA \cite{celeba} is an image dataset that consists of around 202,599 face images with 40 associated attribute annotations. For this task, we aim to classify face images into 4 distinct classes: 'Blond Male', 'Blond Female', 'Non-Blond Male' and 'Non-Blond Female.' This is also a naturally imbalanced task, with `Blond Male' being the minority class. Here, gender is a protected attribute and our goal is to understand how hardware amplifies the bias.  

\subsection{Architectures}
\label{app:architectures}

\textbf{SmallCNN} - We use a custom Convolutional Neural Network with 5 convolutional layers, 3 linear layers and one MaxPooling layer with stride = 2. Using SmallCNN as the base architecture enabled us to explore an extensive ablation grid while making effective use of computational resources.

\textbf{ResNet18, ResNet34 and ResNet50} \cite{resnet} - includes residual blocks and has become an architecture of choice for developing computer vision applications. We evaluate two variants, namely ResNet18 and ResNet34 and ResNet50 with 18, 34 and 50 layers respectively. The versions used in this code were the default implementations available in the torchvision library \citet{torchvision2016}.

\textbf{Controlling model stochasticity.} Stochasticity is typically introduced into deep neural network optimization by factors including algorithmic choices, hardware and software \citep{zhuang2022randomness}. Our goal is to precisely measure the impact of tooling on the fairness and performance of the model. Hence, we seek to control stochasticity introduced by algorithmic factors, to disambiguate the impact of noise introduced by hardware.

The stochasticity arising from algorithmic factors was controlled as follows: the experimental setup maintained a fixed random seed across all Python libraries, including PyTorch \cite{pytorch} 2.0, ensuring consistency. We ensure that the data loading order and augmentation properties were controlled using a fixed seed through FFCV-SSL \citep{FFCV_SSL}, a fork of FFCV \cite{ffcv}. 

A critical part of our analysis requires that there is a fair comparison between different hardware platforms used for both training and inference. To ensure consistent experimental configuration across  hardware platforms, we fix the parameters related to the training harness for a given dataset and model. It includes but is not limited to batch size, learning rate, initialization, and optimizer. The models trained on UTKFace and CIFAR-10 for both settings were in full-precision (FP32) for both training and inference.  Models trained on the CelebA dataset, we employed mixed-precision training due to memory and time constraints. For these experiments, we use float16 as the intermediate data type. The inference, however, takes place in full precision (FP32).  Reported metrics were averaged across runs gathered from approximately five random seeds.

While the theoretical analysis focuses on the notion of disparate impacts under the lens of hardware sensitivity with respect to the risk functions, the empirical results which we report are differences in the accuracy of the resulting models across different hardware. This way the empirical results thus reflect the setting commonly adopted when measuring accuracy parity \citet{zhao2019inherent} across groups. In addition, we also report metrics on gradient norm, Hessian's max eigenvalue, and the average distance from the decision boundary for various groups in the datasets which highlights optimization differences amplified by tooling, which could lead to an increase in hardware sensitivity and shown in the paper, unfairness.

\section{Model Convergence}
\label{app:convergence}
We report training loss curves and test accuracy for each dataset adopted. The results are shown in Figure \ref{fig:metrics}. 
Notice that, in our experiments, in the paper, we identified a stable training configuration for each dataset, where further training led to overfitting. %
For comparison, here, we report results on a ResNet34 model trained for 100 epochs. The learning rate scheduler, \texttt{OneCycleLR}, is dependent on the total number of epochs specified initially, resulting in faster performance improvement with fewer epochs, which is the reason for the different slopes in the curves. 
Notice, that beyond the purple line that we call the "significant epoch" there is diminishing returns in terms of improvement in accuracy, and for datasets like CelebA and UTKFace, over-fitting is observed. 

Recall that our analysis focused on hardware sensitivity to measure unfairness. As shown in Figure \ref{fig:hardware_sensitivity_cifar}, our analysis remains unaltered, regardless of the number of training epoch adopted. While for larger number of epochs the sensitivity values decrease slightly in some datasets, the overall trends and the phenomenon is analogous to what we observed in the paper, even with improved test accuracy. This supports our argument presented in the rebuttal.

As a secondary point, it is also important to note also that our training was limited by processing costs and capabilities of the slowest GPU, making computational efficiency a crucial factor due to the extensive nature of our experiments.

\begin{figure}[ht!]
    \centering
    \includegraphics[width=0.95\textwidth]{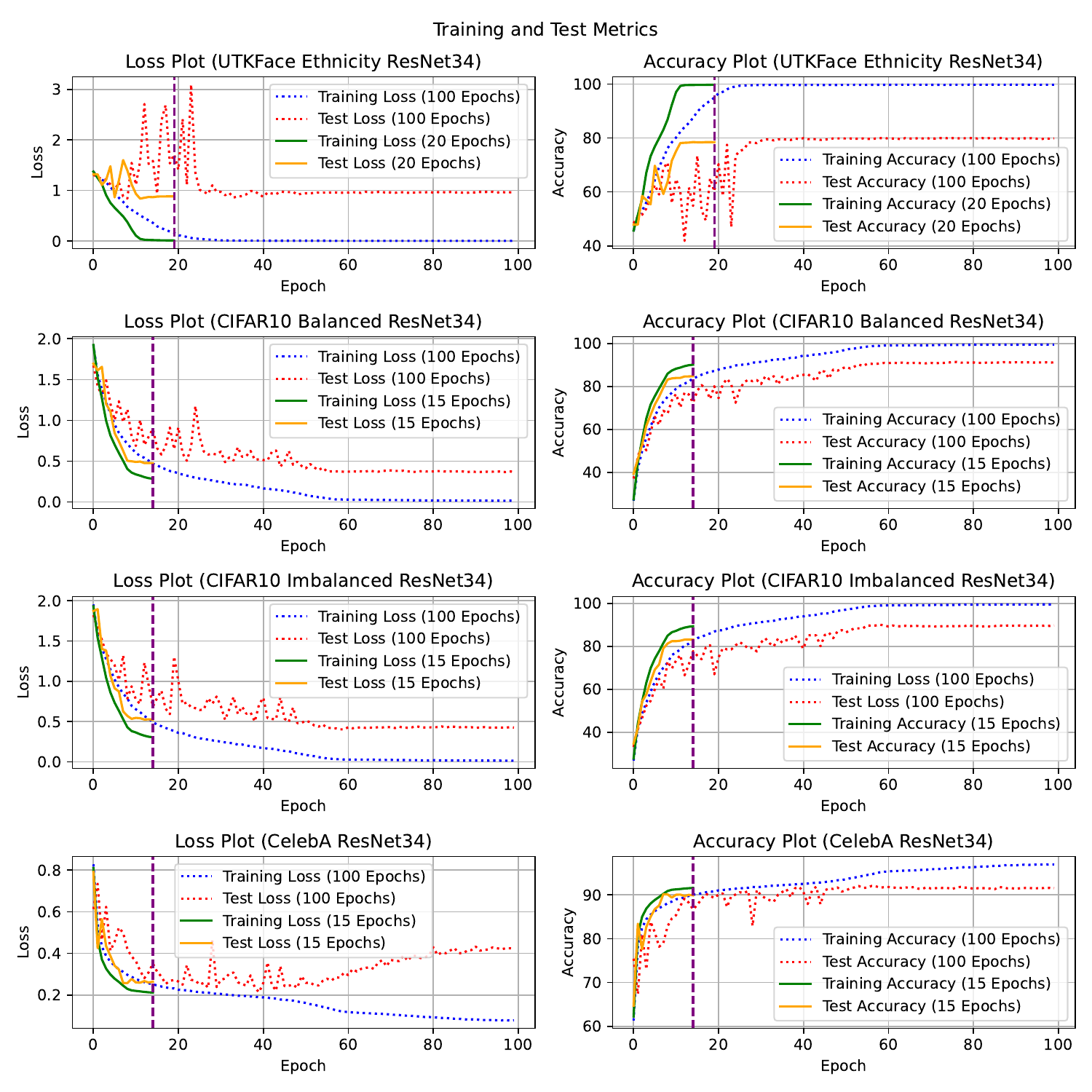}
        \caption{Training and test loss (left) and training and test accuracy (right) for models trained with 100, 20, or 15 epochs. 
        First row: UTKFace; 
        Second row: CIFAR10 balanced;
        Third row: CIFAR10 unbalanced;
        Fourth row: CelebA;
        In each plot, the purple dotted vertical line indicates the training epoch used in the main paper for that specific dataset. Beyond this epoch, continued training leads to no significant impact to hardware sensitivity.}
    \label{fig:metrics}
\end{figure}

\begin{figure}[t!]
    \centering
    \includegraphics[width=0.45\textwidth]{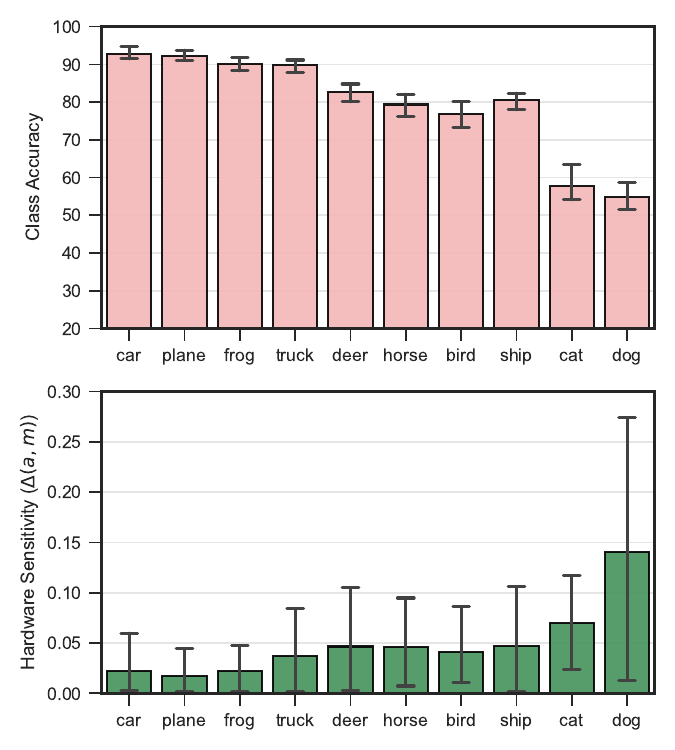}
        \includegraphics[width=0.45\textwidth]{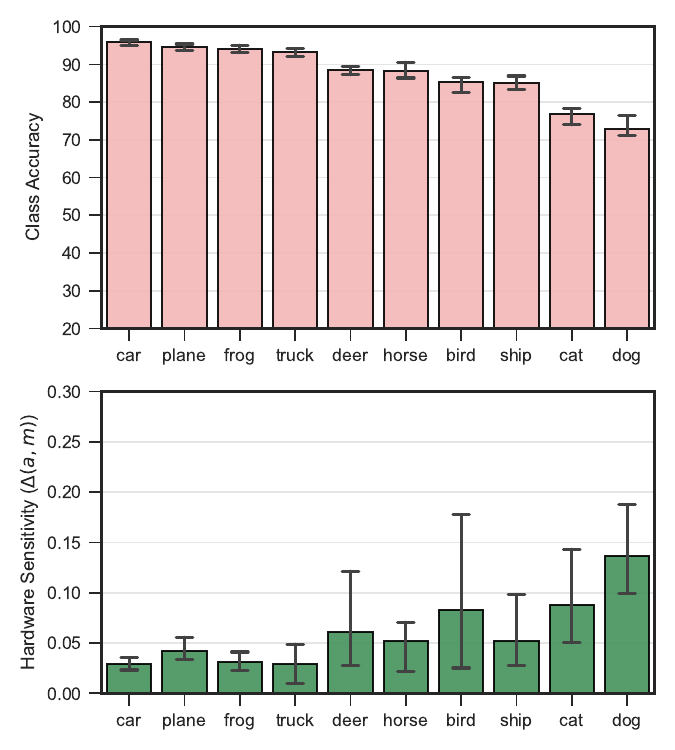}
        \caption{
        Class accuracy (top) and Hardware sensitivity (bottom) --- our fairness metric --- for CIFAR-10 (unbalanced). The left plot report models trained with 15 epochs and the right plots models trained over 100 epochs. 
        Notice that, while some accuracy difference is observable for cat and dog classes, the hardware sensitivity trends are not affected.}
    \label{fig:hardware_sensitivity_cifar}
    \end{figure}

\begin{figure}[t!]
    \centering
    \begin{subfigure}[b]{0.45\textwidth}
        \centering
        \includegraphics[width=\textwidth,trim={0 14pt 0 0}, clip]{imgs/mitigation_eps_fairness_sensitivity_utkface_2e-5.pdf}
        \caption{UTKFace: $\lambda=2e-5$}
    \end{subfigure}
    \begin{subfigure}[b]{0.45\textwidth}
        \centering
        \includegraphics[width=\textwidth,trim={0 14pt 0 0}, clip]{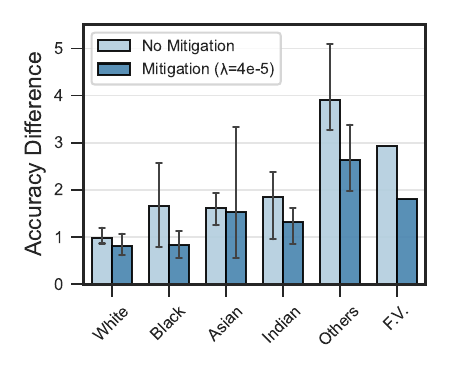}
        \caption{UTKFace: $\lambda=4e-5$}
    \end{subfigure}
    
    \begin{subfigure}[b]{0.45\textwidth}
        \centering
        \includegraphics[width=\textwidth,trim={0 14pt 0 0}, clip]{imgs/mitigation_eps_fairness_sensitivity_cifar_6e-5.pdf}
        \caption{CIFAR-10: $\lambda=6e-5$}
    \end{subfigure}
    \begin{subfigure}[b]{0.45\textwidth}
        \centering
        \includegraphics[width=\textwidth,trim={0 14pt 0 0}, clip]{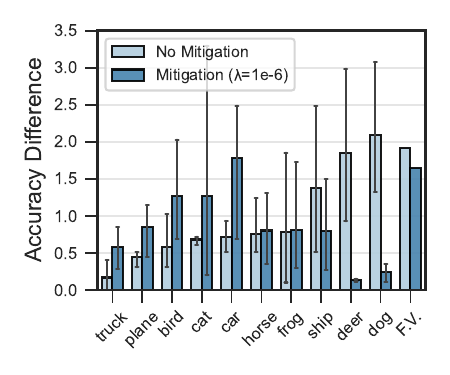}
        \caption{CIFAR-10: $\lambda=1e-6$}
    \end{subfigure}
    
    \caption{
    \textbf{UTKFace and CIFAR-10:} 
    Mitigation solution using different values of $\lambda$. 
    The y-axis describes the accuracy difference across all hardware (analogous to hardware sensitivity) for each class (x-axis). The last column of the x-axis (named, F.V) reports the fairness violation across all groups, which measures unfairness.
    }
    \label{fig:combined_miti}
\end{figure}

\section{Fairness/Accuracy Tradeoffs}
\label{app:tradeoff}
We report additional results on the fairness-accuracy trade-offs resulting from the proposed mitigation mechanism. 
Recall that our mitigation mechanism augments the loss function with a component quantifying the disparity between the group-specific and batch-wide distances to the decision boundary:
\begin{align}
\label{app:mitigation}
\optimal{\btheta}_{F} \,= \argmin_\btheta J(\btheta; D) 
+ \lambda \sum_{a \in \cA}(\delta_{\cD_a} - \delta_{\cD})^{2},
\end{align}
where $\delta_{S}$ represents the average distance to the decision boundary of samples $\bm{x} \in S$, and $\lambda$ is a hyper-parameter which calibrates the level of penalization.

In the following experiment, we report results on the largest accuracy difference across hardware at the varying of the hyper-parameter $\lambda$ chosen within values $\{0.01, 0.001, 0.0001, 1e-5, 2e-5, 4e-5, 6e-5, 8e-5, 1e-6\}$ for CIFAR10 and UTKFace. 

The results are illustrated in \Cref{fig:combined_miti} for CIFAR-10 and UTKFace, where we only report a few parameters representative of the major change among the various class accuracy difference across various hardware. 

We observed that setting the parameter $\lambda = 4e-5$ results in a significant reduction in unfairness when compared to the unmitigated solution. Unfairness here is measured by the maximal accuracy difference across all hardware configurations (which is the \textbf{fairness violation} referred to as \textbf{F.V.} in the figure and displayed as the last column) detailed in equation 3 of the main paper. However, a smaller value of $\lambda = 2e-5$ leads to an even greater reduction in unfairness (F.V.), despite increasing the accuracy difference (hardware sensitivity) for the "Asian" group.

Similar effects are observed for the CIFAR dataset. Using $\lambda = 6e-5$ significantly reduces fairness violation (F.V.), decreasing it from 1.91 (unmitigated solution) to 0.77 (post-mitigation). However, this setting tends to increase hardware sensitivity for certain classes (e.g., bird, cat, and horse). On the other hand, a smaller value of $\lambda = 1e-6$ results in the most substantial decrease in hardware sensitivity for classes such as deer and dog. Nevertheless, this value does not reduce unfairness (F.V.) as effectively as $\lambda = 6e-5$.

Therefore, as stated in the paper, while it is possible to optimize the choice of the value $\lambda$ during the empirical risk process, e.g., using a Lagrangian dual approach as in \citep{fioretto2020lagrangian,Fioretto:AAAI20}, we found that even a traditional simple grid search allows us to find good $\lambda$ values yielding an effective reduction in accuracy disparity.

\section{Model Architecture/Size}
\label{app:model_size}
We report a summary of our observations related to model capacity and its relation to hardware sensitivity.
Figure \ref{fig:model_hardware_sensitivity} reports the class accuracy (first and third rows) and Hardware sensitivity (second and fourth rows) for UTKFace Ethnicity (top rows) and CIFAR10 (Imbalanced) (bottom rows). \textbf{Unfairness} is reported in the last histogram of the green plots (second and fourth rows), and indicates the maximal pairwise class difference in accuracy across hardware, consistently with the notion adopted in the paper and above (named F.V) The three columns display results obtained using three different architectures of increasing complexity: A small CNN (527,754 parameters) , a ResNet18 (11.4M parameters) and a ResNet34 (21.7M parameters). 

Firstly, notice that unfairness due to hardware tooling is widespread across all datasets and architectures tested. Next, notice that the smallest architecture (left) exhibits high hardware sensitivity for certain classes, such as ship, dog, and cat in CIFAR10 and others in UTKFace. This is likely due to under-fitting. The ResNet18 (middle) has a moderate impact on reducing hardware sensitivity. However, transitioning to an ever deeper model, ResNet34 (right) results in significant increase in fairness violations --- correspondingly hardware sensitivity increases sharply for minority classes.  This exacerbates the overall fairness violation of the model.  Specifically, the fairness violation  increases by approximately 68\% for ResNet34 (0.27) compared to ResNet18 (0.16) and SmallCNN (0.17) in CIFAR10, and nearly doubles for ResNet34 (0.294) versus ResNet18 (0.14) and SmallCNN (0.12) in UTKFace Ethnicity.

\clearpage
\section{Additional plots for CelebA}

In this section, we report figures related to our experiments on the CelebA dataset. We find the trends to be similar to the other datasets, and support our observations further. Below we provide figures related to hardware sensitivity, distance to decision boundary and mitigation experiments on CelebA.

\begin{figure}[t!]
    \centering
    \includegraphics[width=0.95\linewidth]{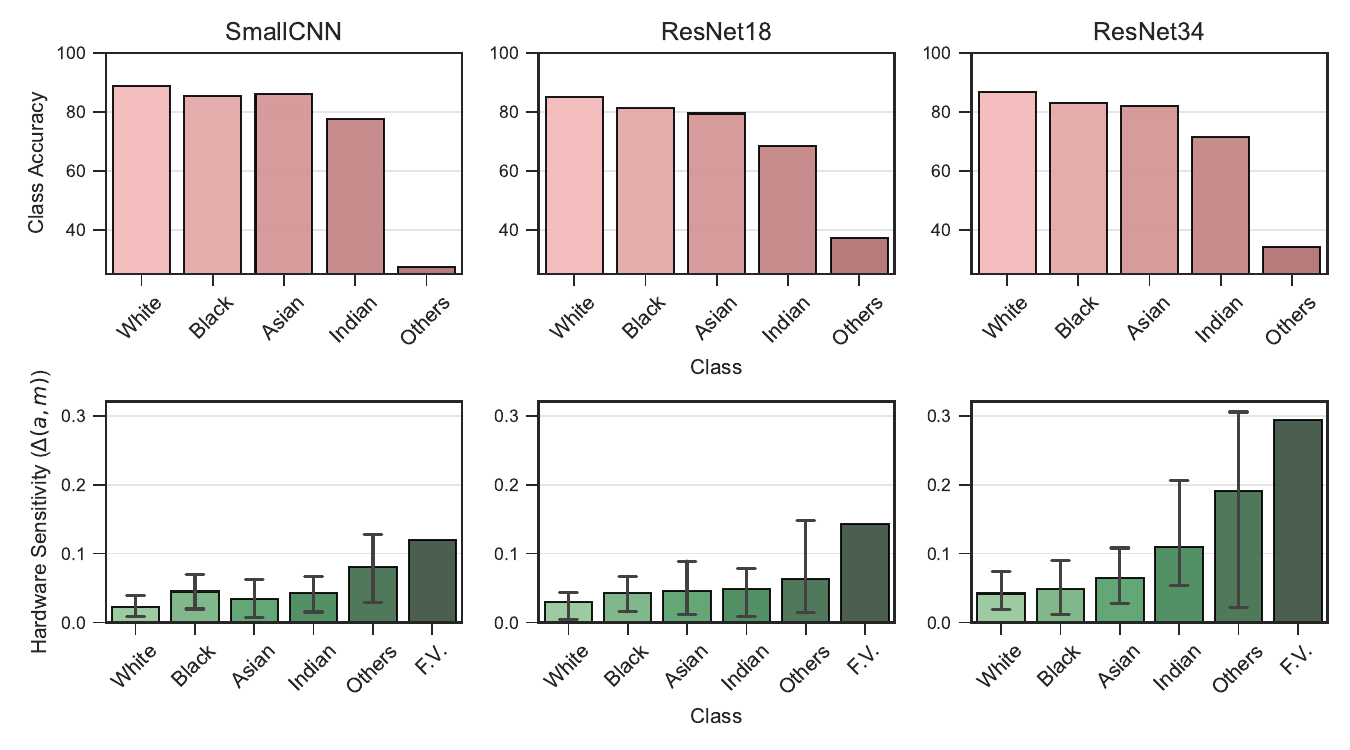}
    \includegraphics[width=0.95\linewidth]{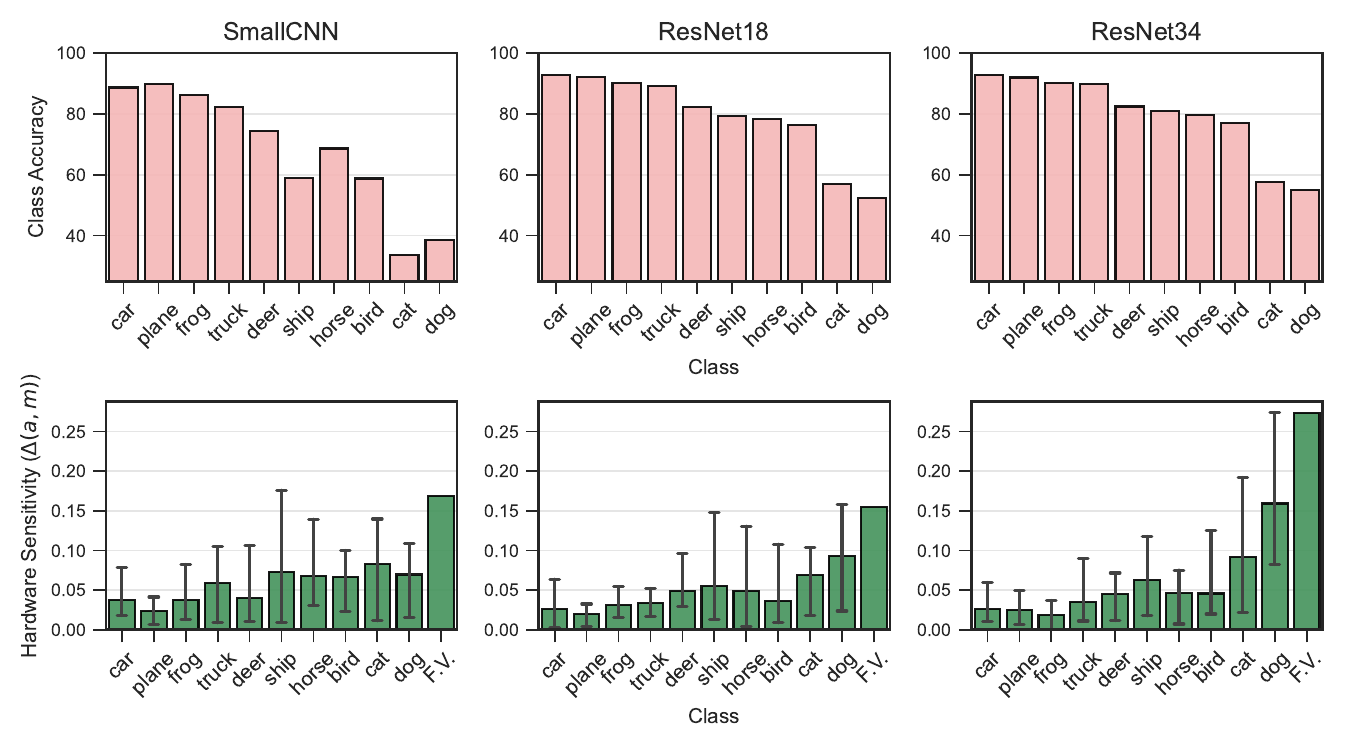}
        \caption{\textbf{Left to right:} Hardware Sensitivity Across Architectures. Notice as we increase the model size, the Fairness impact due to hardware choice increases. \textbf{Top:} Class-wise accuracy and Hardware Sensitivity for UTKFace Ethnicity. \textbf{Bottom: } For CIFAR-10.}
        \label{fig:model_hardware_sensitivity}
\end{figure}

\begin{figure*}[t!]
    \centering
        \includegraphics[width=0.4\textwidth]{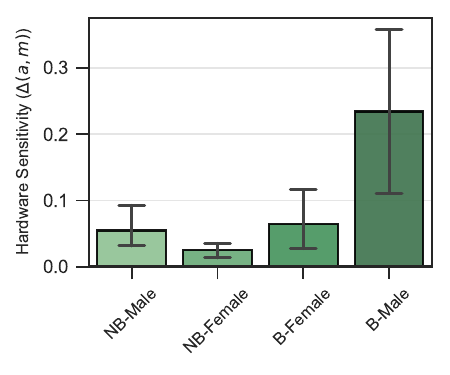}
        \includegraphics[width=0.4\textwidth]{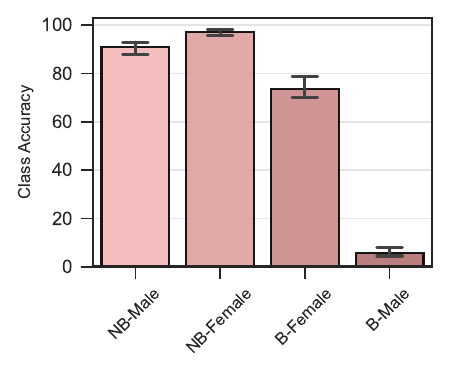}
        \caption{\textbf{Left:} Hardware Sensitivity for CelebA (ResNet34). \textbf{Right:} Class-wise accuracy}
        \label{fig:celeba_sensitivity}
\end{figure*}

\begin{figure*}
    \centering
        \includegraphics[width=0.85\textwidth]{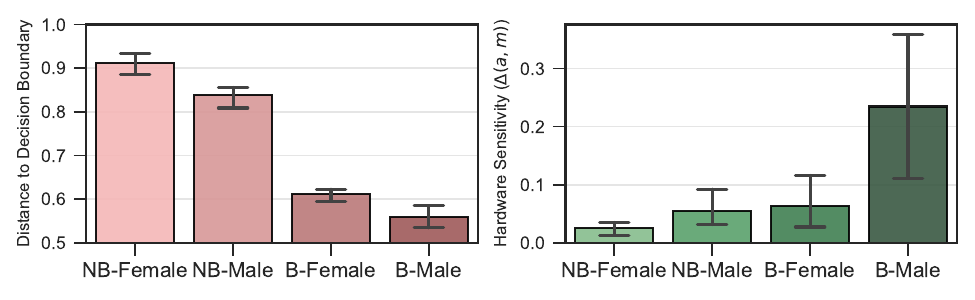}
        \caption{\textbf{Left: } Distance-to-decision Boundary for CelebA on ResNet34 \textbf{Right: } Hardware Sensitivity }
    \label{fig:dtd_celeba}
\end{figure*}

\begin{figure*}[t!]
     \centering
     \begin{subfigure}[a]{\textwidth}
         \centering
         \includegraphics[width=0.85\textwidth]{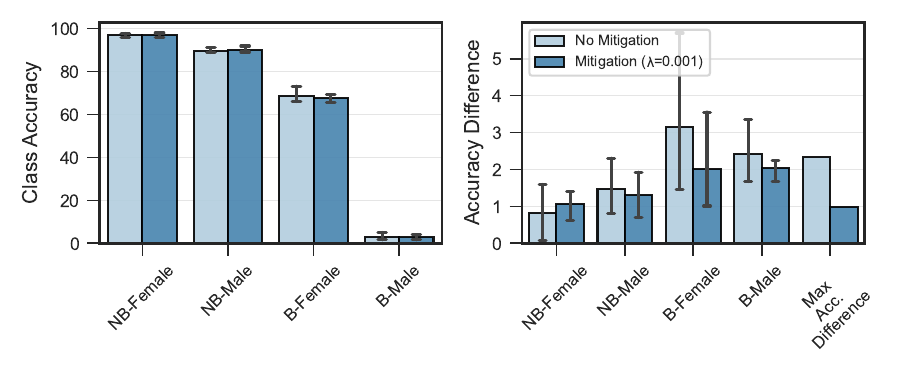}
     \end{subfigure}
        \caption{\textbf{Left:} Class-wise Accuracy for groups pre and post mitigation. \textbf{Right :} Accuracy Difference for groups pre and post mitigation. Notice the Maximum Accuracy Difference between the maximum and minimum accuracy within groups is generally reduced post-mitigation averaged across hardware. CelebA on ResNet50.}
        \label{fig:fairnessmitigation}
\end{figure*}

\end{document}